\documentclass[letterpaper]{article}

\usepackage[preprint]{aaai2027}

\usepackage[hyphens]{url}
\usepackage{graphicx}
\usepackage{natbib}
\usepackage{caption}
\usepackage{algorithm}
\usepackage{algpseudocode}
\usepackage{amsthm}
\usepackage{amssymb}
\usepackage{amsmath}
\usepackage{mathtools}
\usepackage{subcaption}
\usepackage{booktabs}
\usepackage{multirow}
\usepackage{newfloat}
\usepackage{listings}
\DeclareCaptionStyle{ruled}{labelfont=normalfont,labelsep=colon,strut=off}
\floatstyle{ruled}
\newfloat{listing}{tb}{lst}{}
\floatname{listing}{Listing}

\usepackage{docmute}

\makeatletter
\let\arxiv@original@xfloat\@xfloat
\def\@xfloat#1[#2]{%
  \def\arxiv@float@opt{#2}%
  \def\arxiv@float@H{H}%
  \ifx\arxiv@float@opt\arxiv@float@H
    \arxiv@original@xfloat{#1}[t]%
  \else
    \arxiv@original@xfloat{#1}[#2]%
  \fi
}
\makeatother

\newtheorem{theorem}{Theorem}
\newtheorem{lemma}{Lemma}
\newtheorem{remark}{Remark}

\newtheorem{property}{Property}

\newcommand{\OPT}{\mathrm{OPT}}
\newcommand{\ADV}{\mathrm{ADV}}
\def\delay{\texttt{delay}}
\def\service{\texttt{service}}
\def\cost{\texttt{cost}}

\title{Learning-Augmented and Randomized Algorithms\\ for Line Aggregation with Delays}
\author{
Tianhang Lu,
Runtian Ren,
Shengcai Liu,
Ke Tang
}
\affiliations{
Guangdong Provincial Key Laboratory of Brain-Inspired Intelligent Computation,\\
Department of Computer Science and Engineering,\\
Southern University of Science and Technology, Shenzhen 518055, China\\
\{liusc3,tangk3\}@sustech.edu.cn
}

\begin{document}

% Use a single bibliography at the end of this combined arXiv file.
\let\arxivOriginalBibliography\bibliography
\renewcommand{\bibliography}[1]{}

% Main paper: the original file is kept unchanged.

\maketitle

\begin{abstract}
This paper studies learning-augmented and randomized online aggregation with delays on a line metric. 
We consider advice given as online suggested service lengths, and evaluate the algorithms in terms of robustness and consistency. 
For each $\lambda \in (0,1]$, we first propose a deterministic learning-augmented \textsc{Balance} algorithm that is $(4/\lambda+1/\lambda^2)$-robust and $(4+\lambda)$-consistent. 
We also propose a randomized algorithm for the problem in the classical adversarial model, which is $(e+1)$-competitive against an oblivious adversary, improving over the deterministic $5$-competitive \textsc{Balance} benchmark~\cite{bienkowski2013chain}.
Notably, this competitive ratio is even lower than the lower bound of $4$ for deterministic online algorithms.
Moreover, we establish a lower bound of $e$ on the competitive ratio of randomized online algorithms, improving the previous lower bound of $e/(e-1)$.
Besides, we combine the two ideas and obtain a randomized learning-augmented algorithm that is $(e/\lambda+1/\lambda^2)$-robust and $(e+\lambda)$-consistent. 
Finally, we conduct numerical experiments to complement our theoretical analysis and evaluate the empirical performance of our algorithms.
\end{abstract}

\section{Introduction}\label{sec:intro}
Aggregation captures a basic tradeoff in online decision making: serving requests early reduces waiting cost, while delaying service may allow many requests to be served together.
This tradeoff appears in inventory replenishment \cite{askoy1988multi, goyal1989joint, joneja1990joint, sindhuchao2005integrated, khouja2008review}, logistics \cite{brahimi2006single, jans2008modeling, quadt2008capacitated, bushuev2015review, karimi2003capacitated}, data transmission \cite{yuan2003synchronization, leung2007overview} etc, and has been studied under the broader framework of multi-level aggregation \cite{bienkowski2020mlap, mari2024online}.
In this paper, we focus on the line-metric case.
Requests arrive over time at different positions on a line that starts from a fixed root.
At any time, the algorithm may choose how far to send a service from the root.
A longer service incurs a larger cost, but it can clear all pending requests located within its reach.
Requests that are not yet served continue to wait and accumulate delay cost over time.
Note that even this line case already contains the two main decisions made online: when to serve, and how far the service should extend to.

Worst-case competitive analysis yields performance guarantees under adversarial arrival sequences.
The classical benchmark for Line Aggregation with Delays (LAD) is the deterministic \textsc{Balance} algorithm of Bienkowski et al.~\shortcite{bienkowski2013chain}.
The algorithm uses a grid in which the length of each level is a power of $2$ and serves a level whenever its accumulated delay reaches one quarter of its length.
The resulting $5$-competitive ratio is the best deterministic upper bound currently known.
Together with the lower bound $4$ \cite{bienkowski2020mlap}, this implies the optimal deterministic competitive ratio is within $[4,5]$.

Although online algorithms are robust in the worst case, they are often overly cautious under uncertainty.
This caution can lead to poor average performance in many real-world settings.
On the other hand, machine learning methods offer a complementary approach.
They use historical data to build predictive models, statistically achieving strong average performance in practice.
However, their predictions may be unreliable under distribution shift and may also fail when test instances are chosen adversarially~\cite{ovadia2019can,koh2021wilds}.
This motivates the learning-augmented method, which aims to combine the worst-case robustness of online algorithms with the predictive power of machine learning methods.
In the learning-augmented setting, algorithms are typically evaluated according to two standard criteria: \textbf{robustness} and \textbf{consistency}.
Robustness measures the worst-case competitive ratio under predictions.
Consistency measures the improved performance guarantee when the predictions are accurate.

Our first result is a deterministic learning-augmented algorithm for LAD.
Rather than blindly following the advice, we use the advice to change the urgency of requests.
Before a request is covered by the advice, its urgency grows slowly.
After the advice has covered it, urgency grows faster.
Specifically, for each $\lambda \in (0,1]$, the algorithm is $(4/\lambda+1/\lambda^2)$-robust and $(4+\lambda)$-consistent.
Here, $\lambda$ is a tradeoff parameter that represents how much the algorithm trusts the advice.

\begin{table*}[t]
\centering
\caption{
Comparison between the best known classical bounds and our results for LAD.
}
\label{tab:comparison}
\small
\begin{tabular}{@{}lccc@{}}
\toprule
\multirow{2}{*}{Algorithm type}
& \multicolumn{2}{c}{Classical online algorithms}
& \multirow{2}{*}{Learning-augmented online algorithms} \\
\cmidrule(lr){2-3}
& Upper bound & Lower bound & \\
\midrule
Deterministic
& $5$~\cite{bienkowski2013chain}
& $4$~\cite{bienkowski2020mlap}
& $\left(\frac{4}{\lambda}+\frac{1}{\lambda^2}\right)$-robustness
  and $(4+\lambda)$-consistency~(this paper) \\[3pt]

Randomized
& $e+1$~(this paper)
& $e$~(this paper)
& $\left(\frac{e}{\lambda}+\frac{1}{\lambda^2}\right)$-robustness
  and $(e+\lambda)$-consistency~(this paper) \\
\bottomrule
\end{tabular}
\end{table*}

Our second result concerns randomization in the classical adversarial model.
We show that the loss caused by the fixed grid in \textsc{Balance} can be reduced by randomizing the grid.
Specifically, we replace the fixed grid with a randomly shifted grid to design a randomized algorithm with competitive ratio $e+1$.
This competitive ratio is even lower than the lower bound of $4$ for deterministic online algorithms.
We also study the limitations of randomized algorithms.
When all requests have the same length, LAD reduces to the TCP acknowledgment problem.
Thus the known lower bound for TCP acknowledgment implies an $e/(e-1)$ lower bound for randomized algorithms for LAD.
We construct a new hard instance for LAD, improving this lower bound to $e$.

Finally, we combine the above two ideas. 
Our randomized learning-augmented algorithm uses the same randomly shifted grid, but it is built from advice-biased request urgency.
For each $\lambda \in (0,1]$, the algorithm is $(e/\lambda+1/\lambda^2)$-robust and $(e+\lambda)$-consistent.
The results show that learning augmentation and randomization are compatible: advice improves the performance when it is useful, while randomization improves the underlying adversarial benchmark.
We summarize our main contributions as follows.
Table~\ref{tab:comparison} compares our bounds with the best-known bounds for LAD.
\begin{enumerate}
\item We propose a deterministic learning-augmented online algorithm for LAD, with $(4/\lambda+1/\lambda^2)$-robustness and $(4+\lambda)$-consistency.
\item We propose an $(e+1)$-competitive randomized algorithm for LAD in the adversarial model, and establish an $e$ lower bound on the competitive ratio of randomized algorithms.
\item We propose a randomized learning-augmented online algorithm for LAD, with $(e/\lambda+1/\lambda^2)$-robustness and $(e+\lambda)$-consistency.
\item We conduct numerical experiments to evaluate the performance of our algorithms.
These results demonstrate that our algorithms outperform existing online algorithms and remain stable as the advice replacement rate increases.
\end{enumerate}

\section{Related Work}\label{sec:related-work}
Learning-augmented algorithms seek to combine worst-case algorithmic guarantees with possibly noisy predictions on future inputs.
This framework has been extensively studied for a range of fundamental problems, including paging~\cite{lykouris2021competitive}, bidding~\cite{angelopoulos2026learning} and matching~\cite{antoniadis2020secretary,choo2026learning}.
See the survey~\cite{mitzenmacher2022algorithms} for a broader overview.
In the following, we review prior work on classical and learning-augmented algorithms for the multi-level aggregation problem and the TCP acknowledgment problem.

\paragraph{Multi-Level Aggregation Problem.}
Aggregation with delays is commonly studied through the multi-level aggregation problem~\cite{bienkowski2020mlap, azar2019general, bienkowski2021new, mari2024online}, where requests arrive at nodes of a rooted tree and a service corresponds to a rooted subtree.  
The depth-one case captures TCP acknowledgment \cite{dooly2001tcp, karlin2001dynamic, seiden2000guessing}, and the depth-two case captures the joint replenishment problem~\cite{buchbinder2008online, bienkowski2014better}. 
For general tree metrics, poly-logarithmic and depth-dependent competitive algorithms are known for delay and deadline variants~\cite{azar2019general,buchbinder2017depth}.
Recently, one work studies learning-augmented algorithms for non-clairvoyant joint replenishment with deadlines~\cite{dinitz2025jrpdeadlines}, while other studies consider cases with non-monotone delays~\cite{azar2026beyond, azar2026online, moseley2025putting, ezra2026universal, bhore2026online}.

\paragraph{TCP Acknowledgment.}
The TCP acknowledgment problem was introduced by Dooly, Goldman, and Scott~\shortcite{dooly2001tcp} as a canonical model for balancing acknowledgment cost against packet latency.
They gave a $2$-competitive deterministic algorithm and a matching deterministic lower bound.
Randomization improves the best possible guarantee.
Seiden~\shortcite{seiden2000guessing} developed a guessing-game framework that underlies the $e/(e-1)$ randomized lower bound.
Karlin, Kenyon, and Randall~\shortcite{karlin2001dynamic} gave a matching $e/(e-1)$-competitive randomized algorithm.
Learning-augmented variants of TCP acknowledgment have been studied more recently.
Bamas, Maggiori, and Svensson~\shortcite{bamas2020primaldual} applied a primal-dual learning-augmented framework in which the prediction is a feasible solution, or equivalently action advice.
Im et al.~\shortcite{im2023online} later considered predictions of packet arrivals and introduced a temporal prediction-error measure.
Their algorithms achieve near-optimal consistency while preserving robustness close to the best classical guarantees.

\paragraph{Line Aggregation with Delays.}
The line aggregation with delays was introduced by Khanna et al.~\shortcite{khanna2002control}, who gave an $O(\log\alpha)$-competitive online algorithm, where $\alpha$ is the sum of all edge lengths.
Subsequently, Brito et al.~\shortcite{brito2004competitive} presented an 8-competitive online algorithm.
Bienkowski et al.~\shortcite{bienkowski2013chain} later proposed a 5-competitive algorithm named \textsc{Balance} and established a deterministic lower bound of $2+\phi$ with $\phi=(1+\sqrt{5})/2$ denoting the golden ratio.
Subsequent work on multi-level aggregation strengthened the deterministic lower bound for the line case to $4$~\cite{bienkowski2020mlap}.

\section{Preliminaries}\label{sec:pre}
\paragraph{Line Aggregation with Delays.}
An instance $I$ consists of a finite sequence of requests on the half-line $\mathbb{R}_+$ rooted at $0$.
Each request is a tuple $r=(\tau,x,w)$, where $\tau$ is its arrival time, $x\in \mathbb{R}_{++}$ is its location, and $w\in \mathbb{R}_{++}$ is its delay rate.
After arriving, a request remains pending until it is served.
If request $r$ is served at time $t$, it incurs a delay cost of $w(t-\tau)$.
A solution is represented by a set of time--location pairs $X=\{(t_1,x_1),\ldots,(t_n,x_n)\}$.
Here, $(t_i,x_i)$ denotes a service performed at location $x_i$ at time $t_i$, and the cost of this service is $x_i$.
The solution $X$ is feasible if, for every request $r=(\tau,x,w)$, there exists a pair $(t_i,x_i)\in X$ such that $t_i\geq\tau$ and $x_i\geq x$.
The delay cost of $X$ is the sum of the delay costs incurred by all requests, whereas its service cost is the sum of the costs of all services in $X$.
The total cost of $X$ is the sum of these two quantities.
For any algorithm $\mathcal{A}$ that produces a feasible solution, let $\delay(\mathcal{A})$, $\service(\mathcal{A})$, and $\cost(\mathcal{A})$ denote the delay cost, service cost, and total cost of its output, respectively.

\paragraph{Advice.}
The algorithm receives advice online in the form of suggested service lengths, which may be provided at arbitrary times.
At any time $t$, the advice may provide a length $a_t>0$.
For a request $r=(\tau,x,w)$, the advice \emph{covers} $r$ at time $t\geq\tau$ if $a_t\geq x$.
We denote the first such time by $t_{\text{cov}}(r)$ and call it the \emph{first coverage time} of $r$.
We call the advice \emph{feasible} if it eventually covers every request, and assume this condition throughout this paper.

\paragraph{Performance Guarantees.}
For each instance $I$, let $\OPT$ and $\ADV$ denote an optimal offline solution and the solution prescribed by the advice, respectively.
A (randomized) online algorithm $\mathcal{A}$ is called \textit{$r$-competitive} or \textit{$r$-robust} if $\mathbb{E}[\cost(\mathcal{A})] \le r \cdot \cost(\OPT)$ for any instance $I$.
Besides, $\mathcal{A}$ is called \textit{$c$-consistent} if $\mathbb{E}[\cost(\mathcal{A})] \le c \cdot \cost(\ADV)$ for any instance $I$ and any feasible advice $\ADV$.
For deterministic algorithms, the expectations are omitted.
Robustness measures protection against any feasible advice, whereas consistency measures performance when the advice is reliable.

\paragraph{The \textsc{Balance} Algorithm.}
The best deterministic online algorithm currently known for LAD is \textsc{Balance} \cite{bienkowski2013chain}.
The algorithm uses a grid in which the length $g$ of each level is a power of $2$, that is, $g=2^j$.
For each level, let $W(t,g)$ denote the total delay accumulated by pending requests in $[0,g]$ up to time $t$.
Whenever one or more levels satisfy $W(t,g)=g/4$, \textsc{Balance} serves the largest such level at this moment.
The following theorem describes the competitive ratio of \textsc{Balance}.

\begin{theorem}\cite{bienkowski2013chain}
\textsc{Balance} is $5$-competitive for LAD.
\end{theorem}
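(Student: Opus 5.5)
We bound the two cost components of \textsc{Balance} separately against $\cost(\OPT)$.

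\textbf{Step 1: delay cost versus service cost.} Each service at level $g$ is triggered exactly when $W(t,g)=g/4$. Consider the requests served by that service. Those in $[0,g]$ have accumulated delay exactly $g/4$. Since $g$ was the largest level reaching its threshold, requests farther out belong to larger levels that have not yet reached their thresholds. We charge each request's delay to the first service that clears it. The invariant $W(t,g')\le g'/4$ holds for every level $g'$ at all times. Summing this over the geometric grid bounds the delay paid at a service of level $g$ by $g/4$, possibly plus the contributions of smaller levels. This gives
\[
\delay(\textsc{Balance}) \le \tfrac14\,\service(\textsc{Balance}) + (\text{lower order}),
\]
so it suffices to bound $\service(\textsc{Balance})$ by roughly $4\cdot\cost(\OPT)$.

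\textbf{Step 2: charging services to $\OPT$.} Take a service of level $g$ at time $t$, and let $t'$ be the previous service of \textsc{Balance} reaching at least $g$. During $(t',t]$, the requests in $[0,g]$ that arrived after $t'$ accumulated delay $g/4$. Either $\OPT$ serves them before $t$, or it does not. In the second case, $\OPT$ pays at least their delay $g/4$ in $(t',t]$, and we charge that delay. In the first case, $\OPT$ performs a service of length greater than $g/2$ in the interval $(t',t]$. The factor $1/2$ comes from rounding to powers of $2$: the requests that triggered level $g$ cannot all lie within $g/2$, because otherwise level $g/2$ would have triggered first. Since the intervals $(t',t]$ for a fixed level are disjoint, each $\OPT$ service of length $\ell$ is charged by at most one service per level $g$ with $g/2<\ell$. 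Summing the geometric series over $g$ gives a constant times $\ell$. We then combine the two cases with careful weights, ideally using the standard potential-function or "critical request" argument of \cite{bienkowski2013chain}, so the total is $\service\le 4\cost(\OPT)$ and hence
\[
\cost(\textsc{Balance}) \le \tfrac54\,\service(\textsc{Balance}) \le 5\,\cost(\OPT).
\]

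\textbf{Main obstacle.} The hardest step is Step 2, namely getting tight constants. A naive charging double-counts $\OPT$ services across levels and across the grid rounding. The fix I would try first is to charge each \textsc{Balance} service of level $g$ only to $\OPT$ services of length in $(g/2,g]$, or to $\OPT$ delay. Requests lying beyond $g/2$ force a long $\OPT$ service. The geometric sum $\sum_{g\le 2\ell} g\le 4\ell$ then yields the factor $4$ exactly. Since the statement is cited from \cite{bienkowski2013chain}, we may ultimately defer to their proof for the precise bookkeeping.
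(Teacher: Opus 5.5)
Your outline has the right shape. It matches the argument the paper gives for robustness of LA-B, which at $\lambda=1$ is exactly this theorem, since $4/\lambda+1/\lambda^2=5$. Each level-$g$ service costs $g+g/4$, one exhibits disjoint portions of $\cost(\OPT)$ worth $g/4$ per service, and your geometric sum $\sum_{g<2\ell}g/4<\ell$ is fine. Step~1 is simpler than you make it: a level-$g$ service clears exactly the requests counted in $W(t,g)=g/4$, so $\delay(\textsc{Balance})=\frac14\service(\textsc{Balance})$ with no lower-order term.

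\textbf{The gap is the case split in Step~2.} ``Either $\OPT$ serves them before $t$, or it does not'' misses the case that matters: $\OPT$ serves \emph{some} of the pending requests before $t$, namely ones near the root, using short services. Your remark that the triggering requests cannot all lie within $g/2$ only gives $W(t,g)-W(t,g/2)\ge g/8$; it does not force a long $\OPT$ service. For example, let the requests pending at $t$ be $r_1$ at location $g/4$ with accumulated delay $g/32$ and $r_2$ at location $g$ with accumulated delay $7g/32$; all level invariants hold. Suppose $\OPT$'s only action in $(t',t]$ is to serve $r_1$ on arrival with a service of length $g/4$, and it serves $r_2$ shortly after $t$. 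Then $\OPT$ pays just over $7g/32<g/4$ delay on these requests and has no service longer than $g/4$ in the window. Neither branch yields $g/4$.

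\textbf{The missing idea is a \emph{split} charge.} Write $g=2^j$, let $L$ be the longest relevant $\OPT$ service in the window, and set $b=\min\{j,\lfloor\log_2L\rfloor+1\}$. Pending requests in $(2^b,2^j]$ are unserved by $\OPT$ before $t$. Tightness at $2^j$ and the invariant $W(t,2^b)\le 2^{b-2}$ give $\OPT$-delay at least $2^{j-2}-2^{b-2}$ on them. The remaining $2^{b-2}$ is charged to the segment $(2^{b-2},2^{b-1}]$ of that $\OPT$ service, which exists because $L\ge 2^{b-1}$.

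This creates a second problem your proposal does not address. The service charge is now indexed by the $\OPT$-dependent level $b$, not by $g$, so ``windows at a fixed level are disjoint'' no longer rules out double counting. A short $\OPT$ service in the window of a level-$2^j$ service can also lie in the window of a later level-$2^{j'}$ service with $j'>j$, and be charged again at the same $b$.

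The paper handles this with the cover sequence $t_k<\dots<t_1<t_0$ (here $t_k$ is your $t'$) and its thresholds. Only $\OPT$ services at least as long as the threshold at their time are eligible. The unserved-requests claim still holds: an $\OPT$ service that clears a request \textsc{Balance} still holds at $t_0$ exceeds its threshold, because the later \textsc{Balance} service defining that threshold failed to clear the request. Moreover, an $\OPT$ service charged by two \textsc{Balance} services must have length at least $2^j$, so the two charges use $b=j$ and $b'>j$.

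Deferring ``the precise bookkeeping'' to the original paper leaves exactly this core step unproved.
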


\section{Learning-Augmented Deterministic Algorithm}\label{sec:la-d}
In this section, we present a learning-augmented algorithm for LAD.
Our learning-augmented online algorithm achieves both robustness and consistency.
Before describing the algorithm, we introduce two key concepts that are central to the subsequent algorithm design and analysis: the request urgency and the level potential.

\begin{algorithm}[t]
\caption{Learning-Augmented \textsc{Balance} (LA-B)}
\label{alg:la-balance}
\begin{algorithmic}[1]
\Require A trust parameter $\lambda\in(0,1]$
\State $P\gets\emptyset$ \Comment{set of pending requests}
\While{an event occurs at time $t$ that is, at least one request arrives, or the advice provides a length, or $Q(t,g)=g/4$ for some level $g$}
\ForAll{requests $r=(\tau,x,w)$ with $\tau=t$}
    \State Add $r$ to $P$ with $\mathrm{status}(r)\gets\text{\textsc{Uncovered}}$.
\EndFor
\If{the advice provides a length $a_t>0$}
    \ForAll{$r=(\tau,x,w)\in P$ with $x\le a_t$ and $\mathrm{status}(r)=\text{\textsc{Uncovered}}$}
        \State $\mathrm{status}(r)\gets(\text{\textsc{Covered}},t)$
    \EndFor
\EndIf
\If{there exists a level $g$ with $Q(t,g)=g/4$}
    \State $g^\star\gets\max\{g=2^j: Q(t,g)=g/4\}$
    \State Perform a service of length $g^\star$ at time $t$.
    \State $P\gets P\setminus\{(\tau,x,w)\in P:x\le g^\star\}$
\EndIf
\EndWhile
\end{algorithmic}
\end{algorithm}

\paragraph{Request Urgency and Level Potential.}
Fix a trust parameter $\lambda\in(0,1]$.
For each request $r=(\tau,x,w)$, define its \emph{request urgency} at time $t$ as
$$
q(t,r)=\lambda w\bigl(\min\{t,t_{\text{cov}}(r)\}-\tau\bigr)_+{}+\frac{1}{\lambda}w(t-t_{\text{cov}}(r))_+,
$$
where $(z)_+=\max\{z,0\}$.
Request urgency assigns a smaller weight to the delay cost accumulated before the advice first covers $r$ and a larger weight to that accumulated after the first coverage time $t_{\mathrm{cov}}(r)$.
The coverage time $t_{\text{cov}}(r)$ provides a temporal signal about when $r$ should be served.
Before time $t_{\text{cov}}(r)$, the algorithm can afford to wait for the service suggested by the advice.
After $t_{\text{cov}}(r)$, additional waiting should make $r$ more urgent to serve.
The trust parameter $\lambda$ controls how strongly this signal affects the request urgency.
In particular, when $\lambda=1$, the request urgency equals the actual delay cost accumulated by $r$.
A smaller $\lambda$ places greater trust in the advice by reducing the urgency accumulated before $t_{\text{cov}}(r)$ and increasing the urgency accumulated afterward.
Nevertheless, for each request $r=(\tau,x,w)$, the request urgency remains comparable to the actual delay cost:
$$
\lambda w(t-\tau)\le q(t,r)\le\frac{1}{\lambda}w(t-\tau).
$$

Note that a service of length $g$ clears all pending requests in $[0,g]$.
We define the \emph{level potential} of $g$ at time $t$ as
$$
Q(t,g)=\sum_{\substack{r=(\tau,x,w)\text{ pending at }t\\x\le g}}q(t,r).
$$
Thus, $q(t,r)$ measures the urgency of an individual request, whereas $Q(t,g)$ aggregates the urgencies of all pending requests that would be cleared by a service of length $g$.
When $\lambda=1$, the request urgencies equal the actual accumulated delay costs, and hence $Q(t,g)=W(t,g)$.

\paragraph{Algorithm Description.}
We present Learning-Augmented \textsc{Balance} (LA-B) in Algorithm~\ref{alg:la-balance} and describe how it works.
A level of length $g$ is called \emph{tight} when $Q(t,g)=g/4$.
The main idea of LA-B is to serve the largest tight level.
Whenever one or more levels become tight, LA-B chooses the largest one, $g^\star$, and performs a service of length $g^\star$.

Algorithm~\ref{alg:la-balance} implements this rule through an event-driven procedure.
After initializing the pending set $P$, the algorithm handles three types of events.
When a request arrives, it is added to $P$ as an uncovered request.
When the advice provides a length $a_t>0$, each uncovered request in $P$ with location at most $a_t$ is marked as covered, and $t$ is recorded as its first coverage time.
When one or more levels become tight, the algorithm selects the largest one, serves it, and removes all requests in that level from $P$.
We remark that the algorithm uses each advice length $a_t$ to update the coverage status of pending requests and records their first coverage times.
This information determines $q(t,r)$ and hence $Q(t,g)$.

\begin{theorem}\label{thm:la-balance-main}
For each $\lambda\in(0,1]$, LA-B is $(4/\lambda+1/\lambda^2)$-robust and $(4+\lambda)$-consistent for LAD.
\end{theorem}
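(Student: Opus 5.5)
The proof will decompose the cost of LA-B into service cost and delay cost and charge both to the total \emph{urgency} that LA-B lets accumulate. It then compares that urgency separately against $\OPT$ (for robustness) and against $\ADV$ (for consistency). This mirrors the analysis of \textsc{Balance} in \cite{bienkowski2013chain}.

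\textbf{Step 1: reduce LA-B to \textsc{Balance} run on a weighted instance.} Because urgency is piecewise linear, each request $r$ can be viewed as a request whose delay rate changes exactly once, from $\lambda w$ to $w/\lambda$ at time $t_{\text{cov}}(r)$. LA-B is then exactly \textsc{Balance} applied to the ``urgency cost'' $U(X)=\sum_r q(t_r(X),r)$, where $t_r(X)$ is the time $X$ serves $r$. The first task is to check that the analysis of \cite{bienkowski2013chain} does not use linearity of delay in time. It should only use that delay is nondecreasing and additive over requests, and that levels become tight continuously. I will reprove the key structural facts for monotone accumulated costs: (i) every service of length $g^\star$ is paid for by urgency $g^\star/4$, which gives $\service(\text{LA-B})\le 4\,U(\text{LA-B})$ up to the geometric-grid factor; and (ii) the charging argument against an arbitrary offline solution $Y$. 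The goal is the generalized bound
\[
U(\text{LA-B}) + \service(\text{LA-B}) \le 4\,\service(Y) + U(Y)
\]
for every feasible $Y$, or a comparable form from which the 5 of \textsc{Balance} arises as $4\cdot\service+1\cdot\delay$. This is the split that the target constants $4/\lambda+1/\lambda^2$ and $4+\lambda$ suggest.

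\textbf{Step 2: convert urgency to real cost.} Since $q(t,r)\ge \lambda w(t-\tau)$, we get $\delay(\text{LA-B})\le U(\text{LA-B})/\lambda$. For robustness take $Y=\OPT$ and use $U(\OPT)\le \delay(\OPT)/\lambda$. This gives $\cost(\text{LA-B})\le \frac1\lambda\bigl(4\,\service(\OPT)+\frac1\lambda\delay(\OPT)\bigr)\le (4/\lambda+1/\lambda^2)\cost(\OPT)$. Getting exactly this requires charging services against urgency without losing a factor: the service term should be bounded through urgency and then divided by $\lambda$ only once. I will adjust the bookkeeping so that the factor $1/\lambda$ multiplies the whole right-hand side.

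For consistency take $Y=\ADV$. Every request is served by $\ADV$ no later than its first coverage time, so its urgency under $\ADV$ is at most $\lambda$ times its delay in $\ADV$, giving $U(\ADV)\le \lambda\,\delay(\ADV)$. It then remains to bound LA-B's real cost by $4\,\service(\ADV)+U(\ADV)$ without the $1/\lambda$ loss. Requests served by LA-B after their coverage time have urgency rate $w/\lambda\ge w$, so their real delay is at most their urgency. Requests served before coverage have urgency $\lambda w(t-\tau)$, and their real delay must instead be charged to the ADV service that later covers them.

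\textbf{Main obstacle.} The hardest step is this consistency charging. Pre-coverage delay of LA-B is undercounted in urgency by a factor $1/\lambda$, so it cannot be bounded through $U$. My first attempt will bound this delay directly by $\service(\ADV)$: between arrival and coverage, the pending requests inside the eventual ADV service length $a$ accumulate real delay less than the threshold amount, roughly $a/4\lambda$ scaled. I will try to absorb this into the factor $4$ on $\service(\ADV)$ by a per-ADV-service interval argument as in \textsc{Balance}. If that loses constants, I will instead refine Step 1 to give separate coefficients to pre- and post-coverage urgency.
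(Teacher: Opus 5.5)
\textbf{There is a genuine gap at the foundation.} The inequality you set as the goal of Step~1, $U(\text{LA-B})+\service(\text{LA-B})\le 4\,\service(Y)+U(Y)$, is false, and it is not what the \textsc{Balance} charging yields.

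Tightness gives the exact identity $\service(\text{LA-B})=4\,U(\text{LA-B})$; there is no grid factor here. So at $\lambda=1$ your bound reads $\cost(\textsc{Balance})\le 4\,\service(\OPT)+\delay(\OPT)\le 4\,\cost(\OPT)$. Your Step~2 would even give robustness $\max\{4/\lambda,1/\lambda^2\}$, better than the theorem, which should have been a warning sign.

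Here is a concrete counterexample. Release at time $0$ requests at $x_i=2^{i-1}+\epsilon$ for $i=-K,\dots,0$. Choose weights decaying so fast that level $2^i$ becomes tight, alone, only after level $2^{i-1}$ has been served. Then \textsc{Balance} serves every level $2^{-K},\dots,1$ and pays $\frac54\sum_i 2^i\to\frac52$. A single service of length $x_0$ at time $0$ costs $\approx\frac12$, all of it service.

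The $5$ of \textsc{Balance} comes from LA-B's own split ($\service=4U$ plus $\delay\le U/\lambda$). It is not a $4\cdot\service(Y)+1\cdot\delay(Y)$ split on the offline side. In the cover-sequence argument, each service's urgency $g/4$ is charged to $Y$'s urgency on the requests in $(2^b,2^j]$ plus a dyadic segment of length $d_b$ of one of $Y$'s services, both with coefficient $1$. What that argument actually proves is $U(\text{LA-B})\le\service(Y)+U(Y)$ for every feasible $Y$. As you suspected, it needs only that $q(\cdot,r)$ is nondecreasing, has no upward jumps, and is additive.

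\textbf{The consistency plan inherits the error and misplaces the difficulty.} Your target $\cost(\text{LA-B})\le 4\,\service(\ADV)+U(\ADV)=4\,\service(\ADV)+\lambda\,\delay(\ADV)$ is false for every $\lambda$. In the instance above with advice $a_0=x_0$ at time $0$, LA-B pays $\to 2+\lambda/2$ against $\to 2$.

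The pre-coverage delay you call the main obstacle is trivial. $\ADV$ serves $r$ exactly at $t_{\mathrm{cov}}(r)$, so $\mathrm{Pre}\le\delay(\ADV)$; charging it to $\service(\ADV)$ instead would cost a factor $1/\lambda$.

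The step the paper actually has to work for is $\mathrm{Post}\le\lambda\,\service(\ADV)$. It groups requests by first coverage time $s$. The LA-B services clearing such a group before the last one use strictly increasing dyadic levels below $G_s<2a_s$, so they contribute post-coverage urgency $<G_s/4$. The last one contributes at most $Q^-(t,G_s)\le G_s/4$ by the trigger invariant. Your proposal has nothing playing this role.

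\textbf{Your framework can be repaired with the corrected inequality,} and this gives a unified alternative to the paper's Post lemma. Write $U=U(\text{LA-B})=\lambda\mathrm{Pre}+\mathrm{Post}/\lambda$. Then $\delay(\text{LA-B})=(1-\lambda^2)\mathrm{Pre}+\lambda U$ and $U(\ADV)=\lambda\,\delay(\ADV)$, hence
\[
\cost(\text{LA-B})=(4+\lambda)U+(1-\lambda^2)\mathrm{Pre}\le(4+\lambda)\service(\ADV)+(1+4\lambda)\delay(\ADV).
\]
For robustness, $\cost(\text{LA-B})\le(4+1/\lambda)U\le(4+1/\lambda)\bigl(\service(\OPT)+\delay(\OPT)/\lambda\bigr)$. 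As written, though, neither half of the proof goes through.
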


\begin{proof}[Proof sketch]
Fix an instance $I$.
When LA-B serves length $g$ at time $t$, tightness implies $Q(t,g)=g/4$.
Since $q(t,r)\ge\lambda w(t-\tau)$ for each request $r$ cleared by this service, their total delay cost is at most $g/(4\lambda)$.
Thus, the service and delay costs associated with this step sum to at most $g+g/(4\lambda)$.

For robustness, we apply the standard charging argument for \textsc{Balance} to the grid.
For each LA-B service of length $g$, we identify a portion of the cost of $\OPT$ with value at least $\lambda g/4$.
The factor $\lambda$ follows from $q(t,r)\le w(t-\tau)/\lambda$, which implies that the real delay corresponding to any amount of urgency is at least $\lambda$ times that amount.
Moreover, the charged portions can be chosen disjointly across all LA-B services.
Consequently, the robust ratio is at most
$$
\frac{g+g/(4\lambda)}{\lambda g/4}=\frac{4}{\lambda}+\frac{1}{\lambda^2}.
$$

For consistency, partition the delay incurred by each request $r=(\tau,x,w)$ into the portions incurred before and after its first coverage time.
Let $t_{\mathrm{ser}}(r)$ denote the time at which LA-B serves $r$, and define 
$$
\text{pre}(r)=w(\min\{t_{\text{ser}}(r),t_{\text{cov}}(r)\}-\tau)_+
$$ 
and 
$$
\text{post}(r)=w(t_{\text{ser}}(r)-t_{\text{cov}}(r))_+.
$$
Set $\mathrm{Pre}=\sum_r\operatorname{pre}(r)$ and $\mathrm{Post}=\sum_r\operatorname{post}(r)$.
These two components satisfy $\mathrm{Pre}\leq \delay(\ADV)$ and $\mathrm{Post}\leq \lambda \cdot\service(\ADV)$.
At each LA-B service of length $g$ at time $t$, tightness gives $g=4Q(t,g)$.
Since this service removes exactly the pending requests contributing to $Q(t,g)$, summing over all services yields $$\service(\text{LA-B})=4\sum_r q(t_{\mathrm{ser}}(r),r).$$
Note that $q(t_{\text{ser}}(r),r)=\lambda \cdot \text{pre}(r)+\frac{1}{\lambda} \cdot \text{post}(r)$.
Therefore, $\service(\text{LA-B})=4\lambda \cdot \mathrm{Pre}+\frac{4}{\lambda}\cdot \mathrm{Post}$ is at most
$$
4\lambda\cdot\delay(\ADV)+4\cdot\service(\ADV).
$$
Furthermore, $\delay(\text{LA-B})=\mathrm{Pre}+\mathrm{Post}$ is at most
$$
\delay(\ADV)+\lambda \cdot \service(\ADV).
$$
We can thus upper bound $\cost(\text{LA-B})$ by
$$
(4\lambda+1)\cdot\delay(\ADV)+(4+\lambda)\cdot\service(\ADV).
$$
This is no more than $(4+\lambda)\cdot \cost(\ADV)$.
Hence, the theorem follows.
\end{proof}

\section{Randomized Algorithm}\label{sec:random}
In this section, we give an $(e+1)$-competitive randomized algorithm and an $e$ lower bound against oblivious adversaries.
We first describe the randomized algorithm.

\begin{algorithm}[t]
\caption{\textsc{Randomized Balance} (RB)}
\label{alg:rb}
\begin{algorithmic}[1]
\State Sample $U\sim\mathrm{Unif}[0,1]$ and set $b_i=e^{i+U}$ for all $i\in\mathbb Z$.
\While{some level $b_i$ satisfies $W(t,b_i)=b_i/e$}
 \State $j\gets\max\{i:W(t,b_i)=b_i/e\}$.
 \State Serve length $b_j$ to clear all pending requests with $x_r\le b_j$.
\EndWhile
\end{algorithmic}
\end{algorithm}

\paragraph{Algorithm Description.}
We present Randomized \textsc{Balance} (RB) in Algorithm~\ref{alg:rb} and describe how it works.
At the beginning, we sample $U\sim\mathrm{Unif}[0,1]$ and define the levels $b_i=e^{i+U}$ for all $i\in\mathbb Z$.
A level $b_i$ is called \emph{tight} when $W(t,b_i)=b_i/e$.
Whenever one or more levels become tight, RB serves the largest such level at this moment.

\begin{theorem}\label{thm:rb-main}
RB is a randomized $(e+1)$-competitive algorithm against an oblivious adversary for LAD.
\end{theorem}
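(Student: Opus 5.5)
We follow the structure of the \textsc{Balance} analysis in \cite{bienkowski2013chain}, replacing the fixed power-of-$2$ grid by the randomly shifted grid $b_i=e^{i+U}$ and the threshold $g/4$ by $b/e$. The cost of RB splits into a service part and a delay part. Each service of length $b_j$ at time $t$ clears requests whose accumulated delay is exactly $W(t,b_j)=b_j/e$ (tightness). Hence $\delay(\mathrm{RB})=\frac1e\service(\mathrm{RB})$, and it suffices to prove
$$
\mathbb{E}[\service(\mathrm{RB})]\le e\cdot\cost(\OPT),
$$
which would give $\mathbb{E}[\cost(\mathrm{RB})]\le (1+1/e)\,e\cdot\cost(\OPT)=(e+1)\cost(\OPT)$.

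\textbf{Charging step.} Fix $\OPT$ and a realization of $U$. Following the \textsc{Balance} charging argument, each RB service of length $b_j$ at time $t$ is attributed to one of two sources. The first source is the delay that $\OPT$ pays on requests in $[0,b_j]$ during the \emph{phase} of level $b_j$, i.e., since the previous RB service covering $b_j$. The second source is an $\OPT$ service of length at least $b_j$ performed inside that phase. In the first case, the $b_j/e$ of RB-delay was also paid by $\OPT$, so an amount of $\OPT$ delay equal to $b_j/e$ can be charged. Phases of the same level are disjoint, and the ``largest tight level'' rule ensures that a request's delay is charged only at the level that served it, so the charges are disjoint. The first case therefore contributes at most $e\cdot\delay(\OPT)$ in total.

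\textbf{Randomization step.} In the second case, an $\OPT$ service of length $y$ can be charged by RB services at levels $b_j,b_{j-1},\dots$ below it. The goal is to charge it at most once per level, with geometric decay. The intended bound is that the total charged to it is at most $\sum_{i:\,b_i\le y'} b_i$, where $y'$ is the next grid point at or above $y$. Because the grid ratio is $e$, this sum is at most $b_{\max}\cdot\frac{e}{e-1}$, where $b_{\max}$ is the largest grid point not exceeding $y$, or just above it. With $U$ uniform, the expected largest grid point below $y$ equals
$$
\int_0^1 y e^{-s}\,ds=y(1-1/e).
$$
The expected charge is then $y(1-1/e)\cdot\frac{e}{e-1}=y$. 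We need the charge to be at most $e\cdot y$ to match the target, which leaves slack. That slack would be used to absorb the boundary effects: an RB service at the grid point just above $y$ that is nonetheless triggered inside the phase.

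\textbf{Main obstacle.} The hardest step is designing the disjoint charging in the second case so that each $\OPT$ service of length $y$ receives, over the randomness of $U$, expected charge at most $e\,y$, and so that this combines with the first case additively rather than with a maximum. The first attempt would be to charge the RB service at level $b_j$ to the \emph{first} $\OPT$ service of length at least $b_j$ in its phase. It then has to be shown that two RB services of the same level cannot charge the same $\OPT$ service. This holds because, after an $\OPT$ service at length $\ge b_j$, the next RB phase of level $b_j$ starts afterwards. If this per-level uniqueness goes through, then summing $\mathbb{E}\bigl[\sum_{b_i\lesssim y} b_i\bigr]\le e\,y$ over all $\OPT$ services, together with the first case, gives $\mathbb{E}[\service(\mathrm{RB})]\le e\cdot\cost(\OPT)$. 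This holds against an oblivious adversary because $\OPT$ is fixed before $U$ is drawn.
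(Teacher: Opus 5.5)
Your reduction to $\mathbb{E}[\service(\mathrm{RB})]\le e\cdot\cost(\OPT)$ is correct; it is the paper's base-charge inequality scaled by $e$. The gap is in the charging step, whose two cases do not cover all situations. You assert that if $\OPT$ has no service of length at least $b_j$ in the phase, then $\OPT$ also paid the $b_j/e$ of delay on the requests RB clears at $t$. This is false whenever $\OPT$ serves some of those requests earlier in the phase with a shorter service.

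For instance, release requests at a single location $x\in(b_{j-1},b_j)$ at time $0$. RB waits until level $b_j$ becomes tight and pays $b_j+b_j/e$. $\OPT$ serves at time $0$ with length $x<b_j$ and pays no delay. This RB service falls into your first case, yet neither of your sources pays for it; it must be charged to an $\OPT$ service shorter than $b_j$. In general such a charge can only be partial. It can also come from an RB service at a level far above the $\OPT$ service, not just from the grid point immediately above it. So it cannot be dismissed as a boundary effect absorbed by slack.

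The missing idea is the split $b_j/e=(b_j-b_h)/e+b_h/e$.
\begin{itemize}
\item Using the cover sequence, call an $\OPT$ service in the phase \emph{large} if it is at least as long as every RB service performed between it and $t_0$. Any $\OPT$ service before $t_0$ that serves a request RB clears at $t_0$ lies in the phase and is large.
\item Let $A$ be the length of the longest large service, and let $b_h$ be the first grid level strictly above $A$, capped at $b_j$.
\item The level invariant $W(t_0,b_h)\le b_h/e$ shows that the requests cleared at $t_0$ in $(b_h,b_j]$ carry delay at least $(b_j-b_h)/e$. $\OPT$ serves none of them before $t_0$, so this part goes to $\delay(\OPT)$. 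Only $b_h/e$ goes to the service of length $A$.
\end{itemize}
This is what makes the delay and service charges add.

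Uniqueness must then be proved for the charged level $h$, not the RB level $j$. Disjointness of same-level phases does not give it, since RB services of two different levels could a priori both charge level $H_U(A)$. The paper excludes this with the monotone cover thresholds: if RB services at $t<t'$ charge the same $\OPT$ service, then $j'>j$ and $A\ge b_j$, hence $h=j<h'$.

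The resulting charge per $\OPT$ service is $\sum_{i\le H_U(A)}b_i/e=b_{H_U(A)}/(e-1)$, whose expectation is exactly $A$, i.e., $e\cdot A$ in your normalization. So the factor-$e$ ``slack'' you found is fully consumed by the level just above $A$. Your estimate from the largest grid point below $y$ was only available because the first case was overstated.
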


\begin{figure*}[t]
\centering
\includegraphics[width=0.8\textwidth]{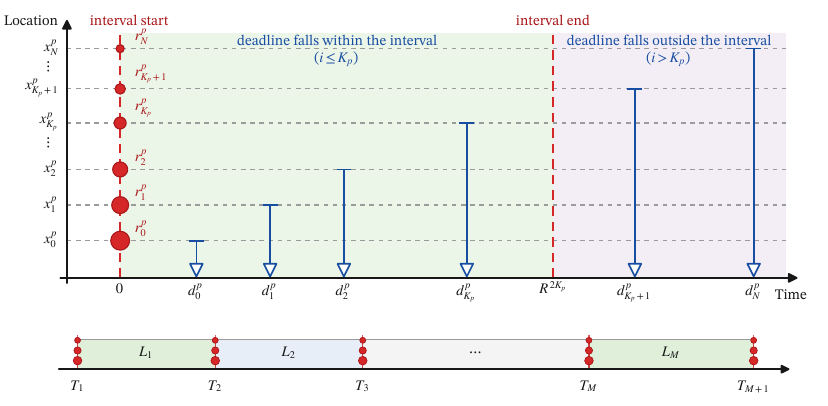}
\caption{Illustration of the hard instance. Top: request locations and relative deadlines in interval $p$. Bottom: the $M$ consecutive intervals forming the full instance.}
\label{fig:random-lower-hard-instance}
\end{figure*}

\begin{proof}[Proof sketch]
Fix an instance $I$.
When RB serves length $b_j$ at time $t$, tightness gives $W(t,b_j)=b_j/e$, so the service and delay costs associated with this step sum to $b_j+b_j/e=(e+1)b_j/e.$
It thus suffices to prove that the expected total base charge is at most the offline optimum, i.e.,
\begin{equation}\label{eq:ran-charge}
\mathbb E_U\!\left[\sum_{\text{\rm RB services }(t,b_j)}\frac{b_j}{e}\right]\le\cost(\OPT).
\end{equation}

We apply the standard charging argument for \textsc{Balance} to the randomly shifted grid.
For each RB service $(t,b_j)$, the argument associates it with an $\OPT$ service of length $A>0$.
Let $H_U(A)=\min\{i\in\mathbb Z:b_i>A\}$ be the index of the first randomly shifted level that strictly exceeds $A$, and set $h=\min\{j,H_U(A)\}$.
If no $\OPT$ service is associated with the RB service, we use the convention $b_h=0$.
We split its base charge as $$b_j/e=(b_j-b_h)/e+b_h/e.$$

We first charge $(b_j-b_h)/e$ to the delay cost of $\OPT$.
The charging argument guarantees that the requests served by RB from $(b_h,b_j]$ at time $t$ remain unserved by $\OPT$ until $t$.
Moreover, tightness at $b_j$ and the level invariant at $b_h$ give
$$
W(t,b_j)-W(t,b_h)\ge\frac{b_j-b_h}{e}.
$$
Therefore, $\OPT$ incurs delay at least $(b_j-b_h)/e$ on these requests.
The charged request sets are disjoint across RB services because each request is cleared by RB only once.
Hence the total delay charge is at most $\delay(\OPT)$.

We then charge the remaining amount $b_h/e$ to the associated $\OPT$ service of length $A$.
The charging argument guarantees that $h\le H_U(A)$ and that the same $\OPT$ service is charged at most once at each level $h$.
Thus, for a fixed offset $U$, its total service charge is at most $\sum_{i\le H_U(A)}b_i/e$.
Since the levels form a randomly shifted geometric sequence,
$$
\mathbb E_U\!\left[\sum_{i\le H_U(A)}\frac{b_i}{e}\right]=\frac{A}{e-1}\int_0^1 e^z\,dz=A.
$$
Thus the expected service charge assigned to every $\OPT$ service is at most its service cost.
Summing the delay and service charges yields (\ref{eq:ran-charge}).
Multiplying by the local factor $e+1$ completes the proof.
\end{proof}

\begin{remark}
We explain why $e$ is the optimal choice for the base of the grid.
Generally, for any $c>1$, consider a variant of Algorithm~\ref{alg:rb} obtained by replacing $b_i=e^{i+U}$ with $b_i=c^{i+U}$.
The expected charge assigned to this service is at most
$$
\mathbb E_U\!\left[\sum_{i\le H_U(A)}\frac{b_i}{c}\right]=\frac{A}{\ln c}.
$$
%This charge is bounded by $A$ only if $\ln c\ge 1$ (i.e. $c\ge e$).
Meanwhile, each tight RB service incurs a cost $b_j+b_j/c=(c+1)b_j/c$.
The competitive ratio is thus bounded by $(c+1)\max\{1,1/\ln c\}$, which is minimized to $e+1$ when $c = e$.
\end{remark}

\paragraph{Lower Bound.}
We next show that the improvement is intrinsic to LAD rather than inherited from the single-location TCP acknowledgment problem.

\begin{theorem}\label{thm:random-lower-main}
No online randomized algorithm can achieve a competitive ratio less than $e$ against an oblivious adversary for LAD.
\end{theorem}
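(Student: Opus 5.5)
We will use Yao's principle: we exhibit a probability distribution over LAD instances such that every deterministic online algorithm has expected cost at least $(e-\varepsilon)$ times the expected optimal cost, for arbitrarily small $\varepsilon>0$. The figure indicates the shape of the instance. It consists of $M$ consecutive, well-separated time intervals. Within each interval $p$, requests are placed at geometrically spread locations, and each carries a ``relative deadline.'' A deadline is modelled by a request of huge delay rate arriving at the deadline time, or equivalently by a very heavy request that forces immediate service.

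\textbf{Single interval.} Fix an interval $p$ and a large integer $n$. At the start of the interval, requests appear at locations $x_1<x_2<\dots<x_n$ with $x_k=\rho^k$ for a base $\rho$ close to $1$; we take $x_k$ nearly continuous on a scale $[1,L]$. The deadlines are chosen so that the location at which the adversary finally ``stops'' the sequence is random: a threshold $X$ is drawn from a density proportional to $1/x$ (log-uniform), possibly with an atom at the top. Requests at locations $\le X$ must eventually be served, and the requests beyond $X$ never appear. The online algorithm learns only progressively, as deadlines expire, how far the sequence extends. It therefore faces a continuous ``guessing game'' in the style of Seiden: how far beyond the current known extent to serve. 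Serving too short forces a further service later, and the costs add, since every service starts from the root. Serving too long wastes length. OPT knows $X$ and pays about $X$ with a single service.

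\textbf{Bounding deterministic algorithms.} Any deterministic algorithm in an interval is described by a nondecreasing sequence of service lengths $y_1<y_2<\dots$, where each $y_i$ is chosen once the realized prefix $\le y_{i-1}$ is known, and the algorithm stops at the first $y_i\ge X$. Its cost is $\sum_{i: y_{i-1}<X} y_i$. We compute $\mathbb E[\sum_i y_i \mathbf 1[X>y_{i-1}]]$ under the log-uniform law. This yields a functional of the form $\sum_i y_i\,\Pr[X>y_{i-1}]$ to compare with $\mathbb E[X]$. The standard analysis of doubling strategies shows that the optimal choice is a geometric sequence $y_i=c^{i+U}$, which gives a ratio $c/\ln c$, minimized to $e$ at $c=e$. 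This matches the Remark, which is evidence that the distribution is right. To make the deterministic bound rigorous, we would choose the density of $X$ so that the algorithm is indifferent, i.e.\ every strategy has expected ratio at least $e-\varepsilon$. This amounts to solving an integral equation for the distribution, in the way the $e/(e-1)$ TCP bound is derived, and then letting $L\to\infty$ to remove boundary effects.

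\textbf{Combining intervals and the main obstacle.} The $M$ intervals are spaced far apart in time, so that neither OPT nor the algorithm benefits from sharing services across intervals. Costs therefore add, and by linearity the ratio of expectations stays at least $e-\varepsilon$. Repeating over $M$ intervals makes the additive constants negligible, as required by the definition of competitiveness with additive slack.

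The main obstacle is encoding the guessing game faithfully with delays rather than hard deadlines. We must ensure the algorithm gains nothing by waiting: delay costs incurred before a deadline must be negligible compared with the service lengths, while a missed deadline costs more than any service. We must also ensure the algorithm cannot infer $X$ before committing. We would first try weights $\varepsilon$ for early requests and weight $\to\infty$ for a request arriving exactly at each revealed deadline. We would then check that the delay contribution is $O(\varepsilon)\cdot \mathbb E[X]$.
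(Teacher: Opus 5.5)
There is a genuine gap. Your outline (Yao's principle, a Seiden-style guessing game over service lengths, repetition over intervals) has the right shape, but you never give a valid embedding of the game into LAD, and the one you sketch fails.

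\textbf{The deadline encoding does not work.} Modelling a deadline by a heavy request arriving at the deadline time forces \emph{every} solution, OPT included, to perform a service of length at least $x_k$ at that moment. These requests arrive at different times, so OPT pays about $\sum_{x_k\le X}x_k\gg X$ (for $\rho\to1$). The algorithm pays the same forced costs, so no gap remains. Likewise, ``requests beyond $X$ never appear'' clashes with releasing all requests at the start. Either the algorithm reads off $X$ immediately, or those requests exist and someone must pay for them. With well-separated intervals, leaving them pending costs delay over the whole gap.

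\textbf{The missing idea.} The paper makes deadlines soft and hides the threshold in \emph{time}. All requests $r^p_i$ arrive at $T_p$ with $x^p_i=e^{i\varepsilon}$ and $w^p_i=R^{1-2i}$, so waiting until $d^p_i=R^{2i}/2$ already costs $R/2$. The only signal is the arrival of the next batch after $L_p=R^{2K_p}$. Hence by $T_p+d^p_j$ the algorithm knows $K_p\ge j$, but must commit before learning whether $K_p\ge j+1$. The comparison solution serves $x^p_{K_p}$ once at $T_p$ and lets requests with $j>K_p$ ride into the \emph{consecutive} later intervals. Their total expected delay is at most $MNR(1-e^{-\varepsilon})/(e^{-\varepsilon}R^2-1)$, negligible as $R\to\infty$. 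The repetition $M\to\infty$ is there to amortize the final clean-up service of length $e^{N\varepsilon}$, not an additive constant. You also need to handle algorithms that deliberately miss deadlines. The paper's auxiliary algorithm $\mathcal{A}'$ adds one service of length $x^p_N$ in each interval where a deadline is missed, which costs only a factor $R/(R+2e^{N\varepsilon})$.

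\textbf{The distribution and the deterministic bound are not in place.} You compare $\sum_i y_i\Pr[X>y_{i-1}]$ with $\mathbb{E}[X]$ and then add intervals by ratio of expectations. Under a log-uniform law on $[1,L]$, take a geometric grid of ratio $c$ ending at $L$. Then $\sum_i y_i\Pr[X>y_{i-1}]/\mathbb{E}[X]\to c^2\ln c/(c-1)^2$, not $c/\ln c$. At $c\approx3.51$ (where $c-1=2\ln c$) this is about $2.46<e$, so that law cannot certify $e$. Your value $c/\ln c$ is the expected \emph{ratio}, i.e.\ the randomly-shifted-grid upper-bound calculation of the paper's Remark. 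It does not lower-bound arbitrary deterministic strategies, and ``the optimal choice is geometric'' is asserted rather than shown.

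The equalizing law is $\Pr[X\ge y]=1/y$, i.e.\ $\Pr[K_p\ge i]=e^{-i\varepsilon}$, and no integral equation is needed. The forced services split $\{0,\dots,N\}$ into blocks $[a_s,b_s]$. The $s$-th block's service is performed whenever $K_p\ge a_s$ and costs at least $e^{b_s\varepsilon}$. Applying $e^z\ge ez$ gives expected service cost per interval at least $\sum_s e^{(b_s-a_s)\varepsilon}\ge e^{1-\varepsilon}(N+1)\varepsilon$. This is compared against $\mathbb{E}[x^p_{K_p}]=1+N(1-e^{-\varepsilon})$.
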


\begin{proof}[Proof sketch]
By Yao's principle~\shortcite{yao1977probabilistic}, it suffices to construct a distribution under which every deterministic algorithm has a competitive ratio approaching $e$.
The hard instance $I$ is constructed as follows.
Fix $\varepsilon\in(0,1]$, an integer $N$ satisfying $(N+1)\varepsilon\geq 1$, and $R>\sqrt e$.
Draw $K_1,\ldots,K_M$ independently from a distribution on $\{0,\ldots,N\}$ whose tail probabilities satisfy $\Pr[K_p\geq i]=e^{-i\varepsilon}$ for $i=0,\ldots,N$.
Starting from $T_1=0$, define $L_p=R^{2K_p}$ and $T_{p+1}=T_p+L_p$ for each $p=1,\ldots,M$.
Thus, $I$ consists of $M$ consecutive time intervals $[T_p,T_{p+1}]$, with the $p$-th interval having length $L_p$.
Requests arrive only at the beginning of each interval.
Each interval $[T_p,T_{p+1}]$ receives requests $r^p_i=(T_p,x^p_i,w^p_i)$ for $i=0,\ldots,N$, where $x^p_i=e^{i\varepsilon}$ and weight $w^p_i=R^{1-2i}$.
Define the (relative) deadline of request $r^p_i$ as $d^p_i=R^{2i}/2$.
Since the $p$-th interval has length $R^{2K_p}$, the deadline of request $r^p_i$ falls within this interval if and only if $i\leq K_p$.
Figure~\ref{fig:random-lower-hard-instance} summarizes both one interval and the total time.

Consider a deterministic algorithm for $I$ satisfying the following property.
\begin{property}\label{prop:deadline}
For each interval $[T_p,T_{p+1}]$, if the deadline of $r^p_j$ falls within this interval, i.e., $j\leq K_p$, then the algorithm serves $r_j^p$ no later than time $T_p+d_j^p$.
\end{property}
Denote $G_{N,\varepsilon}=e^{1-\varepsilon}(N+1)\varepsilon$.
We first prove that $\mathbb{E}[\cost(\mathcal{A})]\geq MG_{N,\varepsilon}$ for every algorithm $\mathcal{A}$ satisfying Property~\ref{prop:deadline}.
More generally, for any algorithm $\mathcal{A}$ that may not satisfy Property~\ref{prop:deadline}, we construct an auxiliary algorithm $\mathcal{A}'$ that simulates $\mathcal{A}$.
At the first artificial deadline missed by $\mathcal{A}$ in interval $p$, algorithm $\mathcal{A}'$ immediately performs one additional service of length $x_N^p$.
Thus, $\mathcal{A}'$ performs at most one additional service in each interval and satisfies Property~\ref{prop:deadline}.

Let $\rho_p$ denote the probability that $\mathcal{A}$ misses the deadline $d^p_j$ for at least one $j$ during interval $p$.
If $\mathcal{A}$ misses the deadline $d^p_j$, then the resulting delay cost incurred by $\mathcal{A}$ is at least $w^p_jd^p_j$, which is equal to $R/2$.
Therefore,
\begin{equation}\label{eq:dcost-A}
\mathbb{E}[\cost(\mathcal{A})]\geq\frac{R}{2}\sum_{p=1}^M\rho_p.
\end{equation}
Since $\mathcal{A}'$ performs an additional service of length $x^p_N$ only when $\mathcal{A}$ misses a deadline in the $p$-th interval, and since $\mathcal{A}'$ satisfies Property~\ref{prop:deadline}, it follows that 
$$
\mathbb{E}[\cost(\mathcal{A})]+\sum_{p=1}^Mx^p_N\rho_p\geq \mathbb{E}[\cost(\mathcal{A}')]\geq MG_{N,\varepsilon}.
$$
Combining this inequality with inequality~(\ref{eq:dcost-A}), we obtain
\begin{equation}\label{eq:cost-A}
\mathbb{E}[\cost(\mathcal{A})]\geq\frac{RM}{R+2e^{N\varepsilon}}G_{N,\varepsilon}.
\end{equation}

For an upper bound on $\cost(\OPT)$, we consider an algorithm $\mathcal{B}$ such that $\mathcal{B}$ serves $x^p_{K_p}$ at each $T_p$, for $p=1,\ldots,M$, and serves $x^M_N$ at $T_{M+1}$.
The expected service cost in the $p$-th interval is $\mathbb E[x^p_{K_p}]=1+N(1-e^{-\varepsilon})$.
Therefore, $\mathbb{E}[\service(\mathcal{B})]$ is equal to
\begin{equation}\label{eq:scost-B}
\sum_{p=1}^M\mathbb E[x^p_{K_p}]+x^M_N=M+MN(1-e^{-\varepsilon})+e^{N\varepsilon}.
\end{equation}
On the other hand, $\mathcal{B}$ does not immediately serve any request $r_j^p$ with $j>K_p$.
We show that
\begin{equation}\label{eq:dcost-B}
\mathbb{E}[\delay(\mathcal{B})]\leq MNR\frac{1-e^{-\varepsilon}}{e^{-\varepsilon} R^2-1}.
\end{equation}
Since $\mathbb{E}[\cost(\OPT)]\leq\mathbb{E}[\cost(\mathcal{B})]$, combining~(\ref{eq:cost-A}), (\ref{eq:scost-B}), and~(\ref{eq:dcost-B}) with Yao's principle gives a lower bound on the competitive ratio.
For fixed $N$ and $\varepsilon$, letting $M\to\infty$ and then $R\to\infty$ makes this lower bound arbitrarily close to
\begin{equation}\label{eq:lower-bound}
\frac{e^{1-\varepsilon}(N+1)\varepsilon}{1+N(1-e^{-\varepsilon})}.
\end{equation}
Taking $N\to\infty$ and then letting $\varepsilon\to 0$, the lower bound in~\eqref{eq:lower-bound} converges to $e$.
Therefore no randomized online algorithm can achieve a competitive ratio below $e$ against an oblivious adversary.
\end{proof}

\section{Learning-Augmented Randomized Algorithm}\label{sec:la-random}
In this section, we present a learning-augmented randomized algorithm for LAD.
The algorithm combines the learning-augmented deterministic algorithm with the randomized algorithm.
We describe the algorithm as follows.

\paragraph{Algorithm Description.}
We present Learning-Augmented Randomized \textsc{Balance} (LA-RB) in Algorithm~\ref{alg:la-rb} and describe how it works.
Fix a trust parameter $\lambda\in(0,1]$.
At the beginning, we sample $U\sim\mathrm{Unif}[0,1]$ and define the levels $b_i=e^{i+U}$ for all $i\in\mathbb Z$.
For each $b_i$, let $Q(t,b_i)$ denote the level potential of the pending requests in $[0,b_i]$, computed using the advice-adjusted request urgency $q(t,r)$ introduced for LA-B.
A level is \emph{tight} when $Q(t,b_i)=b_i/e$.
Whenever one or more levels become tight, LA-RB serves the largest such level at this moment.
Thus, LA-RB uses advice to adjust the delay contributed by each request and uses the random levels to determine the available service lengths.
When $\lambda=1$, $Q(t,b_i)=W(t,b_i)$, and LA-RB reduces to RB.

The following theorem establishes the robustness and consistency of Algorithm~\ref{alg:la-rb}.
The proof combines the charging argument for LA-B with the randomly shifted grid analysis for RB.
We defer the details to the supplementary material.

\begin{algorithm}[t]
\caption{Learning-Augmented Randomized \textsc{Balance} (LA-RB)}
\label{alg:la-rb}
\begin{algorithmic}[1]
\Require A trust parameter $\lambda\in(0,1]$
\State Sample $U\sim\mathrm{Unif}[0,1]$ and set $b_i=e^{i+U}$ for all $i\in\mathbb Z$.
\State $P\gets\emptyset$ \Comment{set of pending requests}
\While{an event occurs at time $t$ that is, at least one request arrives, or the advice provides a length, or $Q(t,b_i)=b_i/e$ for some level $b_i$}
\ForAll{requests $r$ with $\tau_r=t$}
    \State Add $r$ to $P$ with $\mathrm{status}(r)\gets\text{\textsc{Uncovered}}$.
\EndFor
\If{the advice provides a length $a_t>0$}
    \ForAll{$r\in P$ with $x_r\le a_t$ and $\mathrm{status}(r)=\text{\textsc{Uncovered}}$}
        \State $\mathrm{status}(r)\gets(\text{\textsc{Covered}},t)$
    \EndFor
\EndIf
\If{there exists a level $b_i$ with $Q(t,b_i)=b_i/e$}
    \State $j\gets\max\{i:Q(t,b_i)=b_i/e\}$
    \State Perform a service of length $b_j$ at time $t$.
    \State $P\gets P\setminus\{r\in P:x_r\le b_j\}$
\EndIf
\EndWhile
\end{algorithmic}
\end{algorithm}

\begin{theorem}\label{thm:larb-main}
For each $\lambda\in(0,1]$, LA-RB is $(e/\lambda+1/\lambda^2)$-robust and $(e+\lambda)$-consistent for LAD against an oblivious adversary.
\end{theorem}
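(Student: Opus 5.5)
The plan is to treat robustness and consistency separately, each time reusing one of the two sketches above and changing only the grid or the urgency. In both parts we fix the offset $U$ and do a pathwise analysis on the induced grid $\{b_i\}$. Expectation over $U$ is taken only at the end; this is legitimate because the adversary is oblivious, so the instance, the advice, $\OPT$ and $\ADV$ do not depend on $U$.

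\textbf{Consistency.} We repeat the consistency argument of Theorem~\ref{thm:la-balance-main} verbatim, with the threshold $g/4$ replaced by $b_j/e$. Tightness at each service $(t,b_j)$ gives $b_j=e\,Q(t,b_j)$, and the service removes exactly the requests contributing to $Q(t,b_j)$. Hence
\[
\service(\text{LA-RB})=e\sum_r q(t_{\mathrm{ser}}(r),r)=e\lambda\,\mathrm{Pre}+\frac{e}{\lambda}\,\mathrm{Post}.
\]
We reuse the two facts $\mathrm{Pre}\le\delay(\ADV)$ and $\mathrm{Post}\le\lambda\,\service(\ADV)$. They hold for every $U$, since they come only from the urgency rule and tightness. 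Proving them is the one step that genuinely needs an argument here. The first is immediate, because $\ADV$ serves $r$ no earlier than $t_{\mathrm{cov}}(r)$. For the second, take an $\ADV$ service of length $a$ at time $s$. After $s$, every request it covers accrues urgency at rate $w/\lambda$. Levels below the largest $b_i\ge a$ therefore become tight, so the post-coverage delay charged to this service is at most $\lambda$ times the threshold, which is $\lambda b_i/e$. We then need $b_i/e\le a$ pathwise, or at least in expectation: the expected $b_i/e$ is at most $a\cdot\frac{1}{e}\cdot\frac{e-1}{1}\cdot\frac{e}{e-1}\le a$. This yields
\[
\mathbb E[\cost]\le(e\lambda+1)\,\delay(\ADV)+(e+\lambda)\,\service(\ADV)\le(e+\lambda)\,\cost(\ADV),
\]
using $e\lambda+1\le e+\lambda$ for $\lambda\le1$.

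\textbf{Robustness.} Each service $(t,b_j)$ has local cost at most $b_j+b_j/(e\lambda)$, since $q\ge\lambda w(t-\tau)$ gives real delay at most $Q/\lambda=b_j/(e\lambda)$. It then suffices to show
\[
\mathbb E_U\Big[\sum \lambda b_j/e\Big]\le\cost(\OPT),
\]
because
\[
\frac{b_j+b_j/(e\lambda)}{\lambda b_j/e}=\frac{e}{\lambda}+\frac{1}{\lambda^2}.
\]
We follow the proof of Theorem~\ref{thm:rb-main} and split $\lambda b_j/e=\lambda(b_j-b_h)/e+\lambda b_h/e$, where $h=\min\{j,H_U(A)\}$ for the associated $\OPT$ service of length $A$.

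The delay part uses the charging structure of RB, with $W$ replaced by $Q$ in the level invariant. This gives $Q(t,b_j)-Q(t,b_h)\ge(b_j-b_h)/e$ on requests in $(b_h,b_j]$ that $\OPT$ has not served by time $t$. Since $q\le w(t-\tau)/\lambda$, $\OPT$'s real delay on these requests is at least $\lambda(b_j-b_h)/e$, and these charges are disjoint across services. The service part charges $\lambda b_h/e\le b_h/e$ to the $\OPT$ service. Each $\OPT$ service is charged at most once per level $h\le H_U(A)$, so the expected charge is at most $A$ by the same geometric computation as in the RB proof.

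\textbf{Main obstacle.} The hard step is checking that the RB charging structure, namely the association of each service with an $\OPT$ service, the invariant at $b_h$, and once-per-level charging, survives when $W$ is replaced by $Q$. The key point is that those arguments use only two things: the level potential is monotone in time and additive over the requests in a level, and the algorithm always serves the largest tight level. Both properties hold for $Q$, because each $q(t,r)$ is nondecreasing in $t$ and the urgency of a request depends only on that request.
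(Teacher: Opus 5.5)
Your robustness half follows the paper's argument: robust base charge $\lambda b_j/e$, split at $b_h$, cover-sequence association, once-per-level charging, and $\mathbb{E}_U[\lambda\sum_{i\le H_U(A)}b_i/e]=\lambda A\le A$. Your final consistency algebra also matches. The gap is in the step you flag yourself, the bound on $\mathrm{Post}$. You charge each advice service $(s,a_s)$ a single threshold $\lambda G_s/e$, where $G_s$ is the first level above $a_s$. You then assert that $\mathrm{Post}\le\lambda\,\service(\ADV)$ holds for every $U$. Both claims are false.

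The requests first covered at $s$ can be cleared by several LA-RB services at strictly increasing levels $h_1<\dots<h_{k-1}<G_s$ before the last one. Each of these can clear post-coverage urgency up to $h_\ell/e$. Concretely, let the advice serve length $1$ at time $0$, and let two unit-weight requests arrive at time $0$ at locations $0.9$ and $1$. For $U\in[0.9,1)$, LA-RB first serves $e^{U-1}$, clearing only the first request, and later serves $e^{U}$. Hence $\mathrm{Post}=\lambda(e^{U-1}+e^{U})/e>1.2\lambda$. This exceeds both your per-advice bound $\lambda G_s/e=\lambda e^{U-1}$ and $\lambda\,\service(\ADV)=\lambda$.

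The missing ingredient is the paper's per-group argument. Sum the tight thresholds of the pre-last services as a geometric series below $G_s$, and use the trigger invariant $Q^-(t_k,G_s)\le G_s/e$ for the last service. For each fixed $U$ this gives
\[
\sum_{r\in B_s}\mathrm{post}(r)<\lambda\Bigl(\frac{G_s}{e(e-1)}+\frac{G_s}{e}\Bigr)=\frac{\lambda G_s}{e-1}.
\]
This is not bounded by $\lambda a_s$ pathwise, since $G_s$ can approach $e\,a_s$. Only after averaging over $U$ does the random-bidding identity with $A=a_s$ give $\mathbb{E}_U[G_s/(e-1)]=a_s$. That yields $\mathbb{E}_U[\mathrm{Post}]\le\lambda\,\service(\ADV)$, which suffices because the cost bound is linear in $\mathrm{Post}$.

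Your expectation computation reflects the same omission. In fact $\mathbb{E}_U[G_s/e]=a_s(e-1)/e$, and the unexplained factor $e/(e-1)$ in your product is exactly the geometric-sum factor your argument leaves out.

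So the random grid is essential for consistency, not only for robustness. This lemma, rather than transferring the RB charging to $Q$, is the real obstacle. That transfer is routine. What it actually uses is that $Q(\cdot,b_i)$ has no upward jumps: new requests start at zero urgency, and coverage only changes the slope. Mere monotonicity of $q$ is not what matters.
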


\section{Experiments}\label{sec:exp}
In this section, we conduct numerical experiments to evaluate the performance of our proposed algorithms (LA-B, RB, and LA-RB).
\footnote{The source code is included in the supplementary archive.}
% \footnote{Source code: https://anonymous.4open.science/r/LAD-77F5.}

\begin{figure*}[t]
\centering
\includegraphics[width=0.92\textwidth]{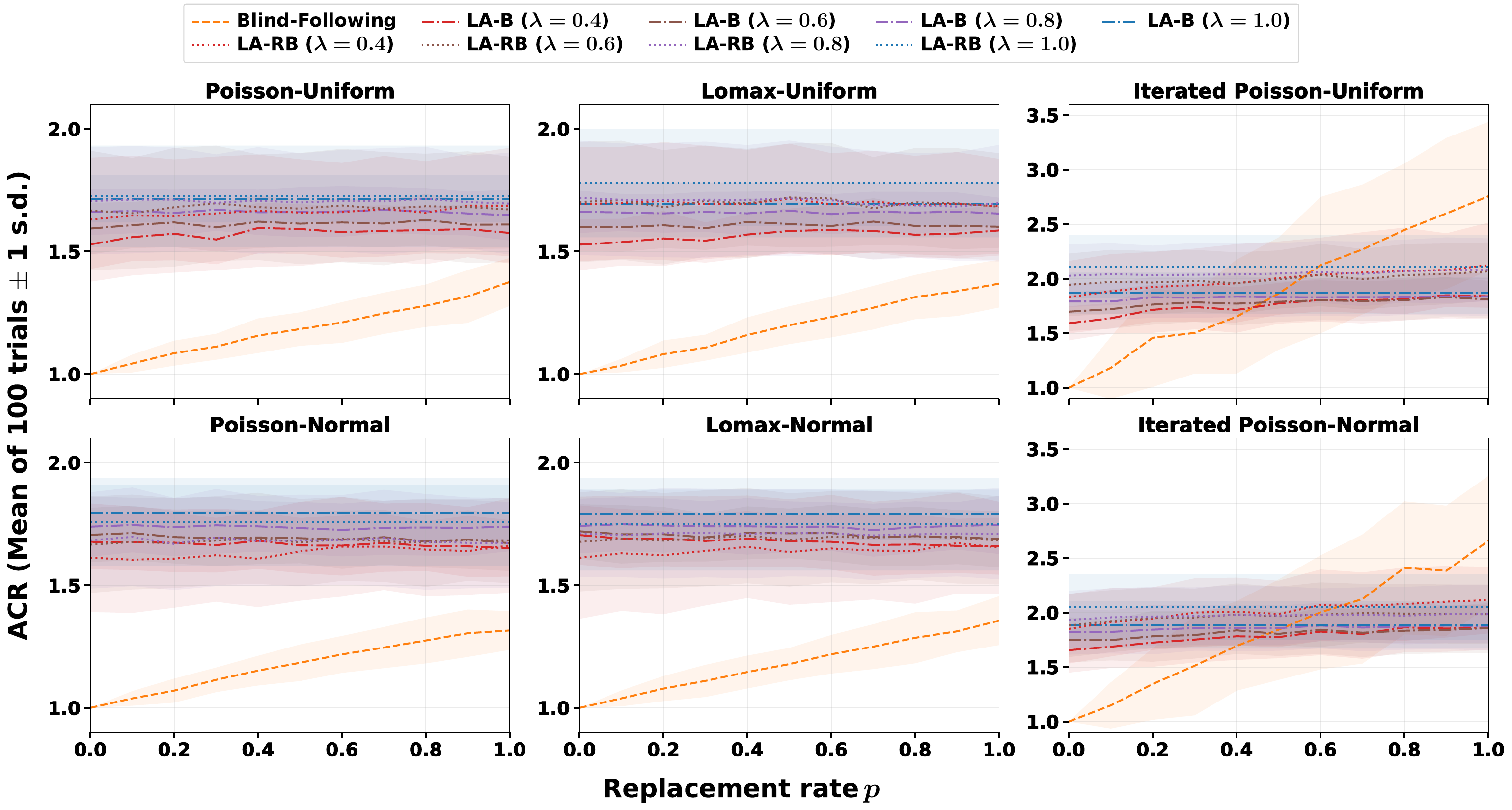}
\caption{The performance of algorithms under six distributions.}
\label{fig:exp}
\end{figure*}

\paragraph{Computational Settings.}
All experiments are conducted on a machine running Windows 11 with an AMD Ryzen 7 5800H CPU and 16 GB of memory.

\paragraph{Dataset.}
We set the time horizon to $T=100$ and the number of discrete locations to $N=100$, and sample each request's delay rate independently from $\operatorname{Unif}[0,1]$.
At each time--location pair $(t,x)$, the request count is sampled independently from a distribution $\mathcal{D}$ with location-dependent parameters.
We construct the distribution $\mathcal{D}$ as follows.
In the spatial domain, we consider two distributions over $x\in\{1,\ldots,N\}$: the uniform distribution, with $q_x=1/N$, and a discrete normal distribution,
$$q_x=\frac{\Phi\!\left(\frac{x+1/2-\mu}{\sigma}\right)-\Phi\!\left(\frac{x-1/2-\mu}{\sigma}\right)}{\Phi\!\left(\frac{N+1/2-\mu}{\sigma}\right)-\Phi\!\left(\frac{1/2-\mu}{\sigma}\right)},$$
where $\mu=(N+1)/2$, $\sigma=(N-1)/6$, and $\Phi$ is the standard normal cumulative distribution function.
In the temporal domain, for each spatial distribution, we consider three request-count distributions following Bamas et al.~\shortcite{bamas2020primaldual}: Poisson with mean $q_x$, Lomax with shape $2$ and scale $q_x$ (followed by unbiased randomized rounding to obtain integer request counts), and 10-times iterated Poisson initialized at $q_x$.
Note that, for all these distributions, the expected total number of requests at each time is $1$.

\paragraph{Advice.}
We perturb each instance with noise, and compute an optimal solution on this perturbed instance and use this as a prediction.
We fix a replacement rate $p\in[0,1]$.
For each time–location pair $(t,x)$ in an instance, we delete all requests at $(t,x)$ with probability $p$.
Independently, with probability $p$, we add a random count of new requests at $(t,x)$, with the count sampled from $\mathcal{D}$ and each delay rate independently sampled from $\operatorname{Unif}[0,1]$.
We then compute an optimal offline solution using the polynomial-time algorithm presented by Bienkowski et al.~\shortcite{bienkowski2013chain}.
To ensure feasibility, we add a service at the final request arrival to cover the farthest request left uncovered by the advice.
A similar generation procedure has also been used in prior work~\cite{bamas2020primaldual,grigorescu2022learning}, where it is referred to as the “replacement rate” strategy.

\paragraph{Algorithms.}
We evaluate LA-B and LA-RB with $\lambda\in\{0.4,0.6,0.8,1\}$.
At $\lambda=1$, LA-B and LA-RB reduce to Balance and RB, respectively.
We also include Blind-Following as a baseline, which directly adopts the advice as a solution to the original instance.

\paragraph{Results.}
For each of the six request distributions, we generate 100 independent random instances and evaluate all parameter configurations on the same set of instances.
The average competitive ratio (ACR) is defined as the average, over all trials, of the online algorithm’s total cost divided by the offline OPT cost.
As can be seen from Figure~\ref{fig:exp}, we make three main observations.
First, \textsc{LA-B} and \textsc{LA-RB} achieve lower ACRs than \textsc{Balance} and \textsc{RB}, respectively.
Second, the ACRs of \textsc{LA-B} and \textsc{LA-RB} change little as the replacement rate $p$ increases.
In contrast, the ACR of Blind-Following increases steadily and eventually exceeds those of both algorithms under the Iterated Poisson distributions.
Third, randomization is more beneficial under spatially non-uniform distributions.
At the same $\lambda$, \textsc{LA-RB} performs worse than \textsc{LA-B} under the uniform spatial distribution, but performs better under the discrete-normal spatial distribution with Poisson and Lomax arrivals, although not with Iterated Poisson arrivals.

\section{Conclusion}\label{sec:cnlu}
In this paper we study LAD in both learning-augmented and randomized models. 
We propose an $(e+1)$-competitive randomized algorithm and establish a lower bound of $e$ on the competitive ratio of randomized online algorithms. 
We also design deterministic and randomized learning-augmented online algorithms that achieve robustness and consistency.
One direction for future work is to close the gap of $1$ between the upper and lower bounds on the competitive ratios of deterministic and randomized online algorithms.
Another direction is to consider learning-augmented online aggregation on tree metrics and general metrics.
% which capture the joint replenishment problem (JRP)

\bibliography{aaai2027}

% Check whether the conference requires a reproducibility checklist to be included in the paper.
% If so, you can uncomment the following line and ajust the path to include it.
% \input{ReproducibilityChecklist.tex}

\appendix
\section*{Supplementary Material}

% Supplement-specific numbering: Figure S1, Table S1, Equation S1, Algorithm S1.
\setcounter{figure}{0}
\setcounter{table}{0}
\setcounter{equation}{0}
\setcounter{algorithm}{0}
\setcounter{listing}{0}
\renewcommand{\thefigure}{S\arabic{figure}}
\renewcommand{\thetable}{S\arabic{table}}
\renewcommand{\theequation}{S\arabic{equation}}
\renewcommand{\thealgorithm}{S\arabic{algorithm}}
\renewcommand{\thelisting}{S\arabic{listing}}

% The supplement has its own \maketitle in the standalone file.
% Suppress it here because we already inserted the supplement heading above.
\let\arxivOriginalMaketitle\maketitle
\def\maketitle{}

% Supplement: the original file is kept unchanged.

\title{Supplementary Material}

% Anonymous review version:
\author{
Anonymous Submission
}
\affiliations{
}

% Camera-ready version example:
% \author{
% First Author\textsuperscript{\rm 1},
% Second Author\textsuperscript{\rm 2}
% }
% \affiliations{
% \textsuperscript{\rm 1}Affiliation One\\
% \textsuperscript{\rm 2}Affiliation Two\\
% first@example.com, second@example.com
% }

\maketitle

\section*{Proof of Theorem 2}

\paragraph{Robustness.}
Fix a given instance $I$ and a feasible advice $\ADV$.
For each LA-B service, we first bound the cost of LA-B.

\begin{lemma}
\label{lem:one-service-cost}
If LA-B serves length $g$ at time $t$, then the service cost and the delay cost of the cleared requests sum to at most $g+g/(4\lambda)$.
\end{lemma}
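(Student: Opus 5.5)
The plan is to bound the two cost components of a single service separately and then add them. Fix an LA-B service of length $g$ at time $t$. Let $S$ be the set of requests it clears, namely the requests pending at $t$ with $x\le g$. The service cost is exactly $g$ by definition, so it remains to show that the total delay cost of $S$ is at most $g/(4\lambda)$.

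First, I would argue that this level is tight at the service time, that is, $Q(t,g)=g/4$. Algorithm~\ref{alg:la-balance} only serves a level $g^\star$ that is tight at time $t$, so this holds by the rule. I will also briefly note why the level potential reaches $g/4$ exactly rather than jumping past it, since the rule is triggered "whenever" the equality holds. Each $q(\cdot,r)$ is continuous and nondecreasing in time. The set of pending requests with $x\le g$ only grows between services, and a new arrival contributes urgency $0$ at its arrival time. Advice events change only the future slope, not the current value of $q$. Hence $Q(\cdot,g)$ is continuous between services and cannot overshoot $g/4$. Requests that arrive exactly at time $t$ contribute zero to both urgency and delay, so they are harmless.

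Second, I apply the pointwise lower bound $q(t,r)\ge \lambda w(t-\tau)$ stated after the definition of request urgency. Verifying it is immediate. If $t\le t_{\text{cov}}(r)$, then $q(t,r)=\lambda w(t-\tau)$. Otherwise,
\[
q(t,r)=\lambda w(t_{\text{cov}}-\tau)+\tfrac1\lambda w(t-t_{\text{cov}})\ge \lambda w(t-\tau),
\]
using $1/\lambda\ge\lambda$. The delay cost of each $r\in S$ is $w(t-\tau)$, because it is served at $t$. Summing over $S$ gives
\[
\sum_{r\in S} w(t-\tau)\le \frac1\lambda\sum_{r\in S}q(t,r)=\frac{Q(t,g)}{\lambda}=\frac{g}{4\lambda}.
\]
Adding the service cost $g$ yields the claimed bound $g+g/(4\lambda)$.

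The only potentially delicate step is the first one: confirming that $Q(t,g)$ equals $g/4$ at the service moment, with no overshoot caused by discrete events. The continuity argument above handles this, so I do not expect a real obstacle.
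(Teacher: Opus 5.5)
Your proposal is correct and matches the paper's proof: the service cost is $g$, and tightness $Q(t,g)=g/4$ combined with $q(t,r)\ge\lambda w(t-\tau)$ gives $\sum_r w(t-\tau)\le Q(t,g)/\lambda=g/(4\lambda)$. Your no-overshoot continuity remark is not needed for this lemma, since the service is only triggered when the equality holds; it is the same argument the paper uses later for its trigger invariant.
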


\begin{proof}
The service clears all pending requests $r=(\tau,x,w)$ with $x\le g$.
The bound $q(t,r)\ge \lambda w(t-\tau)$ gives
$$
\sum_{\substack{r\text{ cleared}\\\text{at time }t}}w(t-\tau)
\le \frac{1}{\lambda}
\sum_{\substack{r\text{ cleared}\\\text{at time }t}}q(t,r)
=\frac{Q(t,g)}{\lambda}
=\frac{g}{4\lambda}.
$$
The last equality follows from tightness.
The service itself costs $g$.
These two bounds prove the lemma.
\end{proof}

We use the \textit{cover sequence} for \textsc{Balance}~\shortcite{bienkowski2013chain}.
Let $\ell(t)$ be the length of the LA-B service at time $t$.
For an LA-B service of length $g=2^j$ at time $t_0$, let $t_1$ be the last service before $t_0$, and recursively let $t_{i+1}$ be the last service before $t_i$ longer than $\ell(t_i)$.
Stop at the first $t_k$ with $\ell(t_k)\ge 2^j$.
Thus,
$$
t_k<t_{k-1}<\cdots<t_1<t_0.
$$
A zero-cost service of infinite length before the first arrival handles the boundary case.

Set the \textit{cover threshold} to $0$ on $(t_1,t_0]$ and to $\ell(t_i)$ on $(t_{i+1},t_i]$.
The thresholds do not decrease when moving backward, and every earlier LA-B service inside an interval has length at most its threshold.
Call an $\OPT$ service in $(t_k,t_0]$ \emph{large} if its length is at least the threshold at its service time.
Choose a maximum-length large service $(s^\star,L)$ if one exists; otherwise set $L=0$.
Define
$$
b=
\begin{cases}
\min\{j,\lfloor\log_2 L\rfloor+1\}, & L>0,\\
-\infty, & L=0,
\end{cases}
$$
and let $d_b=2^{b-2}$ for $b\in\mathbb Z$ and $d_{-\infty}=0$.
We also use the conventions $2^{-\infty}=0$ and $Q(t,0)=0$.

\begin{lemma}[Robustness]
\label{lemma:lab-robustness}
LA-B is $\left(4/\lambda+1/\lambda^2\right)$-robust.
\end{lemma}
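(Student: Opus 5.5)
We will follow the charging scheme of \textsc{Balance}~\cite{bienkowski2013chain} on the cover sequence just defined. Lemma~\ref{lem:one-service-cost} bounds the LA-B cost attributable to a service of length $g=2^j$ at time $t_0$ by $g+g/(4\lambda)$. It therefore suffices to assign to each such service a portion of $\cost(\OPT)$ of value at least $\lambda g/4=\lambda 2^{j-2}$, with portions disjoint across LA-B services. As in the proof sketch, we split $g/4=(2^j-2^b)/4+d_b$, where $b$ and $d_b=2^{b-2}$ come from the largest large $\OPT$ service $(s^\star,L)$ defined above. We then charge the first part to $\OPT$'s delay and the second to $\OPT$'s service $(s^\star,L)$.

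\textbf{Delay part.} Consider the requests with locations in $(2^b,2^j]$ that LA-B clears at $t_0$.

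\begin{itemize}
\item \emph{Tightness gives urgency.} At $t_0$ the level $2^j$ is tight, so $Q(t_0,2^j)=2^{j-2}$. We claim the invariant $Q(t,2^b)<2^{b-2}$ for every $b<j$. Indeed, if some smaller level were tight, the algorithm would have served it, or a larger level, earlier. Hence these requests carry urgency at least $(2^j-2^b)/4$.
\item \emph{$\OPT$ delays them.} We argue that $\OPT$ does not serve any of them before $t_0$. Each such request arrived after the last LA-B service that covered its location. The cover thresholds on the intervals $(t_{i+1},t_i]$ record exactly these covering services. So any $\OPT$ service in $(t_k,t_0]$ reaching beyond $2^b$ would have been large with length $>L$, contradicting maximality. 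Requests arriving before $t_k$ are excluded, since $\ell(t_k)\ge 2^j$ clears them.
\item \emph{From urgency to real delay.} $\OPT$ therefore serves each of these requests at some time $t\ge t_0$. Since $q(t_0,r)\le w(t_0-\tau)/\lambda$, $\OPT$ incurs delay at least $\lambda(2^j-2^b)/4$ on them.
\item \emph{Disjointness.} Each request is cleared by LA-B exactly once, so these delay charges are disjoint across LA-B services.
\end{itemize}

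\textbf{Service part.} We charge $\lambda d_b\le d_b$ to $\OPT$'s service $(s^\star,L)$.

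\begin{itemize}
\item \emph{Charge per level.} Since $b\le\lfloor\log_2 L\rfloor+1$, we have $d_b=2^{b-2}\le L/2$.
\item \emph{At most once per level $b$.} We need that a fixed $\OPT$ service is charged by at most one LA-B service at each level $b$. For two LA-B services with the same $b$ charging the same $\OPT$ service, we show that one lies in the cover interval of the other. The large-service condition together with monotone thresholds then yields a contradiction.
\item \emph{Summing.} The total charge to $(s^\star,L)$ is then at most $\sum_{b\le\lfloor\log_2L\rfloor+1}2^{b-2}\le L$.
\end{itemize}

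\textbf{Conclusion.} Summing both parts gives $\sum_{\text{services}}\lambda g/4\le\cost(\OPT)$. Combining with Lemma~\ref{lem:one-service-cost} gives the ratio
$$
\frac{g+g/(4\lambda)}{\lambda g/4}=\frac{4}{\lambda}+\frac{1}{\lambda^2}.
$$

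\textbf{Main obstacle.} The hardest step is the ``at most once per level'' claim together with the claim that $\OPT$ has not served the requests in $(2^b,2^j]$ before $t_0$. Both depend on the combinatorics of the cover sequence. We expect to reproduce the corresponding lemmas of \cite{bienkowski2013chain} essentially verbatim. Those lemmas use only that LA-B serves the largest tight level and that untouched levels are non-tight, so they do not depend on whether $Q$ or $W$ measures tightness. The invariant $Q(t,2^b)<2^{b-2}$ relies on $Q$ being continuous and nondecreasing in $t$ between services, which holds because $q(t,r)$ is piecewise linear and nondecreasing in $t$.
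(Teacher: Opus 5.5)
Your proposal is correct and follows the paper's argument. It uses the same cover-sequence charging, the same delay charge of $\lambda\bigl(2^{j-2}-2^{b-2}\bigr)$ on the cleared requests in $(2^b,2^j]$ that $\OPT$ leaves pending through $t_0$, and the same per-level service charge to the longest large $\OPT$ service; the paper charges the full dyadic segment $(2^{b-2},2^{b-1}]$ instead of $\lambda d_b$, which changes nothing. One small slip: the invariant must be non-strict, $Q(t_0,2^b)\le 2^{b-2}$, because a smaller level can become tight at the same instant $t_0$ as $2^j$ (the algorithm then serves the larger level), and the non-strict bound is all your argument actually uses.
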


\begin{proof}
Fix an LA-B service of length $g=2^j$ at time $t_0$.
Use its associated cover sequence and the resulting values $b$ and $d_b$ to construct a charge to $\OPT$.
Let $S$ contain the requests cleared at $t_0$ whose locations lie in $(2^b,2^j]$.
No request in $S$ is served by $\OPT$ before $t_0$.
Indeed, the claim is vacuous for $b=j$.
For $b<j$, every request $r=(\tau,x,w)\in S$ arrives after $t_k$, since the service at $t_k$ would otherwise clear it.
If $\OPT$ served $r$ at some $s<t_0$, that service would have length at least $x$ and would be large: its threshold is either $0$ or the length of an LA-B service in $[s,t_0)$, which must be smaller than $x$ because $r$ remains pending until $t_0$.
This contradicts $L=0$ when $b=-\infty$, and it contradicts $L<2^b<x$ when $b\in\mathbb Z$.

Tightness and the trigger invariant give
$$
Q(t_0,2^j)=2^{j-2}
\qquad\text{and}\qquad
Q(t_0,2^b)\le d_b.
$$
Here, the second relation follows by equality when $b=j$, by the trigger invariant when $b\in\mathbb Z$ and $b<j$, and by convention when $b=-\infty$.
Hence,
$$
\sum_{r\in S}q(t_0,r)
=Q(t_0,2^j)-Q(t_0,2^b)
\ge 2^{j-2}-d_b.
$$
Since $\OPT$ leaves each $r\in S$ unserved until $t_0$ and $q(t_0,r)\le w(t_0-\tau)/\lambda$, its delay cost on $S$ is at least
$$
\lambda\sum_{r\in S}q(t_0,r)
\ge \lambda(2^{j-2}-d_b).
$$
If $b\in\mathbb Z$, then $L\ge 2^{b-1}$, so $(s^\star,L)$ contains the half-open segment $(2^{b-2},2^{b-1}]$ of length $d_b$.
Adding this service-cost segment gives a total charge of at least
$$
\lambda(2^{j-2}-d_b)+d_b
=\lambda 2^{j-2}+(1-\lambda)d_b
\ge \lambda 2^{j-2}
=\frac{\lambda g}{4}.
$$

The delay charges are disjoint because LA-B clears each request once, and segments charged to different $\OPT$ services are distinct cost portions.
Suppose two LA-B services at times $u<v$, of lengths $2^j$ and $2^{j'}$, charge the same $\OPT$ service.
If $j'\le j$, the later cover sequence stops at or after $u$ and therefore excludes the shared $\OPT$ service, which occurs no later than $u$.
Hence $j'>j$.
Moreover, the interval containing $u$ in the later sequence has threshold at least $\ell(u)=2^j$.
Because thresholds do not decrease backward, the shared large service has length $L\ge 2^j$.
Consequently, the earlier charge uses $b=j$, whereas the later charge uses $b'\ge j+1$.
Their half-open dyadic segments are disjoint, proving that all charges are disjoint.

Lemma~\ref{lem:one-service-cost} bounds its LA-B cost by
$$
g+\frac{g}{4\lambda}.
$$
The charge constructed above has value at least $\lambda g/4$, so the ratio is at most
$$
\frac{g+g/(4\lambda)}{\lambda g/4}
=\frac4\lambda+\frac1{\lambda^2}.
$$
The service costs and the cleared requests' delay costs partition $\cost(\text{LA-B})$, while the associated charges sum to at most $\cost(\OPT)$.
Summing the displayed bound over all services proves the lemma.
\end{proof}

\paragraph{Consistency.}
We now compare LA-B with $\ADV$.
For each request $r=(\tau,x,w)$, we write $t_{\mathrm{ser}}(r)$ for the time when LA-B serves $r$.
We define
$$
\text{pre}(r)
=w\bigl(\min\{t_{\mathrm{ser}}(r),t_{\mathrm{cov}}(r)\}-\tau\bigr)_+,
$$
and
$$
\text{post}(r)
=w\bigl(t_{\mathrm{ser}}(r)-t_{\mathrm{cov}}(r)\bigr)_+.
$$
We also define $\mathrm{Pre}=\sum_r\text{pre}(r)$ and $\mathrm{Post}=\sum_r\text{post}(r)$.
These two parts satisfy $\delay(\text{LA-B})=\mathrm{Pre}+\mathrm{Post}$.
The first advice service that covers $r$ occurs at $t_{\mathrm{cov}}(r)$.
Thus, the delay paid by $\ADV$ for $r$ is at least $\text{pre}(r)$.
The sum over all requests gives $\mathrm{Pre}\le \delay(\ADV)$.

\begin{lemma}
\label{lem:post-advice-waiting}
$\mathrm{Post}\le \lambda\service(\ADV)$.
\end{lemma}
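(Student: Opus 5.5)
The plan is a charging argument that assigns each request's post-coverage waiting to the single advice service that first covers it. For each time $s$ at which the advice provides a length $a_s$, let $C_s$ be the set of requests $r=(\tau,x,w)$ with $t_{\mathrm{cov}}(r)=s$. Every request belongs to exactly one $C_s$, and each element of $C_s$ satisfies $\tau\le s$ and $x\le a_s$. Hence
$$
\mathrm{Post}=\sum_s\sum_{r\in C_s}\text{post}(r),
$$
and it suffices to show, for each fixed $s$,
$$
\sum_{r\in C_s}\text{post}(r)\le \lambda a_s .
$$
Summing this over $s$ gives $\lambda\cdot\service(\ADV)$.

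Fix $s$ and let $g$ be the smallest power of $2$ with $g\ge a_s$, so $g<2a_s$ and all of $C_s$ lies in $[0,g]$. For $t\ge t_{\mathrm{cov}}(r)$ we have $\frac1\lambda w(t-t_{\mathrm{cov}}(r))\le q(t,r)$. Therefore $\text{post}(r)\le \lambda\, q(t_{\mathrm{ser}}(r),r)$, and it is enough to prove
$$
\sum_{r\in C_s}q(t_{\mathrm{ser}}(r),r)\le g/2 .
$$

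I would use the invariant $Q(t,g')\le g'/4$ for every level $g'$ at every time. This holds because potentials grow continuously and a level is served, and thereby emptied, as soon as it becomes tight.

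Group the requests of $C_s$ by the LA-B service that clears them, and recall that $C_s$ receives no new members after time $s$.

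A service of length $g'\le g$ clears only requests in $[0,g']$. Their total urgency is $Q(t,g')\le g'/4$, so the part belonging to $C_s$ contributes at most $g'/4$. After the first LA-B service of length at least $g'$ following $s$, no member of $C_s$ remains in $[0,g']$. So every later service of length exactly $g'$ clears nothing from $C_s$. Hence each dyadic level $g'\le g$ contributes at most once, for a total of at most $\sum_{g'\le g}g'/4=g/2$.

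A service of length larger than $g$ clears all remaining members of $C_s$ at once, after which nothing further is charged. I have to make sure this case does not add a second term on top of the $g/2$ from the smaller levels. To handle it, I would refine the accounting: the urgency cleared from $C_s$ at such a service is at most $Q(t,g)\le g/4$. I would then argue that level $g$ itself was never served in that case, so the level-$g$ term of the geometric sum is replaced rather than added.

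The main obstacle is making this "each level at most once" bookkeeping airtight. The specific risks are services of equal length occurring at the same instant, and the fact that the urgency of requests in $C_s$ keeps growing between services. I would handle the growth by always bounding a cleared amount by the level potential at the moment of clearing, which the invariant controls. With that done, we get $\mathrm{Post}\le\lambda\sum_s g/2<\lambda\sum_s a_s=\lambda\,\service(\ADV)$.
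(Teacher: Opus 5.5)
There is a real error in your reduction, though it is easy to repair. Your core bookkeeping matches the paper's argument: the relevant services have strictly increasing lengths, all but the last sit at distinct dyadic levels below $g$, and the last is bounded by $Q(t,g)\le g/4$. Taking $g$ from $a_s$ rather than from the farthest pending request (the paper's $G_s$) is also harmless, since $g<2a_s$.

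The problem is the set you sum over. $C_s$ contains \emph{every} request with $t_{\mathrm{cov}}(r)=s$, and $t_{\mathrm{cov}}$ is determined by the advice alone. So $C_s$ may contain many requests that LA-B clears \emph{before} time $s$. Your ``each level at most once'' claim only looks at services following $s$; before $s$, the same level can clear members of $C_s$ arbitrarily often. Concretely, let the only advice be $a_s=1$ at a late time $s$. Let unit-weight requests at location $1$ arrive one at a time before $s$, spaced more than $1/(4\lambda)$ apart. Each is cleared alone by a tight level-$1$ service before $s$ and contributes $q(t_{\mathrm{ser}}(r),r)=1/4$. Hence $\sum_{r\in C_s}q(t_{\mathrm{ser}}(r),r)$ grows linearly in the number of such requests, while $g/2=1/2$. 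The inequality you reduce to is therefore false.

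The lemma itself is unaffected, because all these requests have $\text{post}(r)=0$; the bound $\text{post}(r)\le\lambda q(t_{\mathrm{ser}}(r),r)$ is just far too lossy for them. The missing step is the one the paper makes explicitly. Replace $C_s$ by the subset $B_s$ of requests with $t_{\mathrm{cov}}(r)=s$ that are still pending after the arrivals and coverage updates at time $s$. Only these can have positive post-coverage delay, and each request with positive post-coverage delay lies in exactly one $B_s$. Every service clearing a member of $B_s$ then occurs at or after $s$, so your level-by-level accounting applies. It gives $\sum_{r\in B_s}\text{post}(r)<\lambda g/2<\lambda a_s$, and summing over $s$ yields the lemma, exactly as in the paper's proof.
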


\begin{proof}
We prove the lemma by grouping requests according to their first coverage times.
Fix an advice time $s$ at which the advice serves length $a_s$.
Let $B_s$ be the set of requests with $t_{\mathrm{cov}}(r)=s$ that are still pending after LA-B processes the advice at time $s$ and just before it checks the trigger.
A request served before $s$ has zero post-coverage delay, so excluding it from $B_s$ loses nothing.
If $B_s=\varnothing$, this advice time contributes nothing to $\mathrm{Post}$.
We therefore assume that $B_s\ne\varnothing$ and define $m_s=\max_{r\in B_s}x$ and $G_s=2^{\lceil\log_2m_s\rceil}$.
The level $G_s$ is the smallest level that contains every request in $B_s$.
Since the advice service of length $a_s$ covers every request in $B_s$, we have
$$
m_s\le a_s\qquad\text{and}\qquad m_s\le G_s<2m_s\le 2a_s.
$$
For $r=(\tau,x,w)\in B_s$, define its post-coverage urgency while it remains pending by $u_s(t,r)=\frac{w}{\lambda}(t-s)$, $\forall t\ge s$.
Because $t_{\mathrm{cov}}(r)=s$, the urgency of a pending request $r\in B_s$ can be written as
$$
q(t,r)=\lambda w(s-\tau)+u_s(t,r)\ge u_s(t,r).
$$
At the service time of $r$, this quantity satisfies
$$
u_s(t_{\mathrm{ser}}(r),r)=\frac{\text{post}(r)}{\lambda}.
$$
Thus, it is enough to bound the total post-coverage urgency that LA-B clears from $B_s$.

Let $Q^-(t,g)$ denote the level potential after processing arrivals and advice coverage at time $t$, but immediately before the trigger service.
List, in chronological order, the LA-B services that clear at least one request from $B_s$ as $(t_1,h_1),\ldots,(t_k,h_k)$.
Let $R_i\subseteq B_s$ be the requests from $B_s$ cleared by the service $(t_i,h_i)$.
These sets partition $B_s$.

We first consider the services before the last one.
For every $i<k$, we have $h_i<G_s$.
Otherwise, the service of length $h_i$ would clear every remaining request in $B_s$, since all their locations are at most $m_s\le G_s$.
The relevant service lengths also increase strictly.
After a service of length $h_i$, each request in $B_s$ that remains pending has location larger than $h_i$.
Hence, any later service that clears one of these requests must have length larger than $h_i$.
The values $h_1,\ldots,h_{k-1}$ are therefore distinct dyadic lengths below $G_s$.
Writing $G_s=2^J$, we obtain
$$
\sum_{i=1}^{k-1}h_i<\sum_{j=-\infty}^{J-1}2^j=G_s.
$$
At time $t_i$, the served level $h_i$ is tight.
Moreover, the post-coverage urgency of the requests in $R_i$ is only part of their full urgency.
Therefore,
$$
\sum_{r\in R_i}u_s(t_i,r)\le \sum_{r\in R_i}q(t_i,r)\le Q^-(t_i,h_i)=\frac{h_i}{4}.
$$
Summing over the first $k-1$ relevant services gives
$$
\sum_{i=1}^{k-1}\sum_{r\in R_i}u_s(t_i,r)<\frac{G_s}{4}.
$$
It remains to control the requests cleared by the last relevant service.
For this step, we use the following trigger invariant: at every trigger check and for every dyadic level $g$,
$$
Q^-(t,g)\le\frac{g}{4}.
$$
To see this, observe that a newly arriving request has zero urgency, while first coverage changes the growth rate of its urgency but not its current value.
Thus, arrivals and coverage events do not create upward jumps in $Q(t,g)$.
Between events, $Q(t,g)$ grows continuously, and a service can only decrease it.
Once $Q(t,g)$ reaches $g/4$, the level $g$ is tight, so LA-B serves the largest tight level, whose length is at least $g$.
The potential therefore cannot cross the threshold before the trigger is processed.

Immediately before the last relevant service at time $t_k$, every request in $R_k$ is pending and lies in $[0,G_s]$.
The trigger invariant now gives
$$
\sum_{r\in R_k}u_s(t_k,r)\le Q^-(t_k,G_s)\le\frac{G_s}{4}.
$$
Combining the contributions before and at the last service, we obtain
$$
\frac{1}{\lambda}\sum_{r\in B_s}\text{post}(r)=\sum_{i=1}^k\sum_{r\in R_i}u_s(t_i,r)<\frac{G_s}{2}.
$$
Since $G_s<2a_s$, it follows that
$$
\sum_{r\in B_s}\text{post}(r)<\frac{\lambda G_s}{2}<\lambda a_s.
$$
Finally, every request with positive post-coverage delay satisfies $t_{\mathrm{ser}}(r)>t_{\mathrm{cov}}(r)$.
Such a request is still pending at its unique first coverage time and hence belongs to exactly one set $B_s$.
The sets $B_s$ therefore cover all positive contributions to $\mathrm{Post}$ without overlap.
Summing the per-group bound over all advice times yields
$$
\mathrm{Post}=\sum_s\sum_{r\in B_s}\text{post}(r)\le\lambda\sum_s a_s=\lambda\service(\ADV).
$$
This proves the lemma.
\end{proof}

\begin{lemma}[Consistency]
\label{lemma:lab-consistency}
LA-B is $(4+\lambda)$-consistent.
\end{lemma}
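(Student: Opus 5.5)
We combine the two bounds already established for the decomposition of LA-B's delay, $\delay(\text{LA-B})=\mathrm{Pre}+\mathrm{Post}$, namely $\mathrm{Pre}\le\delay(\ADV)$ and Lemma~\ref{lem:post-advice-waiting}, with an exact accounting of LA-B's service cost through tightness. The argument works service by service and then sums over services.

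\textbf{Step 1: service cost equals four times the cleared urgency.} Each LA-B service of length $g$ at time $t$ occurs at a tight level, so $g=4Q(t,g)$. By definition, $Q(t,g)$ is the sum of $q(t,r)$ over the pending requests with $x\le g$, and this is exactly the set of requests cleared by that service. Every request is cleared by exactly one LA-B service, at time $t_{\mathrm{ser}}(r)$. Summing over all services therefore gives
$$
\service(\text{LA-B})=4\sum_r q(t_{\mathrm{ser}}(r),r).
$$
Here we use $Q^-$, the potential just before the trigger, as in the proof of Lemma~\ref{lem:post-advice-waiting}. This handles the technicality that several events may coincide at time $t$, but it does not change the identity.

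\textbf{Step 2: evaluate the urgency at the service time.} Directly from the definition of $q$, we have $q(t_{\mathrm{ser}}(r),r)=\lambda\,\mathrm{pre}(r)+\frac1\lambda\,\mathrm{post}(r)$. To see this, check the two cases. If $t_{\mathrm{ser}}(r)\le t_{\mathrm{cov}}(r)$, the min equals $t_{\mathrm{ser}}(r)$ and the second term of $q$ vanishes. Otherwise, the first term equals $\lambda\,\mathrm{pre}(r)$ and the second equals $\mathrm{post}(r)/\lambda$. Summing over requests gives $\service(\text{LA-B})=4\lambda\,\mathrm{Pre}+\frac4\lambda\,\mathrm{Post}$.

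\textbf{Step 3: combine the bounds.} Applying $\mathrm{Pre}\le\delay(\ADV)$ and $\mathrm{Post}\le\lambda\,\service(\ADV)$ gives
$$
\cost(\text{LA-B})=(4\lambda+1)\mathrm{Pre}+\Bigl(\tfrac4\lambda+1\Bigr)\mathrm{Post}\le(4\lambda+1)\delay(\ADV)+(4+\lambda)\service(\ADV).
$$
Since $\lambda\le1$, we have $4\lambda+1\le4+\lambda$, because $3\lambda\le3$. Hence $\cost(\text{LA-B})\le(4+\lambda)\cost(\ADV)$, which is the claimed consistency.

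\textbf{Main obstacle.} The heavy lifting is already done in Lemma~\ref{lem:post-advice-waiting}. The only delicate point remaining is the exact identity in Step 1. It requires that the tight level's potential consists precisely of the urgencies of the cleared requests, with no requests that have already been cleared still contributing. It also requires that at a trigger time the potential equals $g/4$ exactly, not more. The trigger invariant shown in the proof of Lemma~\ref{lem:post-advice-waiting} guarantees the latter: arrivals and coverage events cause no upward jumps in the potential. For the final inequality, an upper bound $\le$ in Step 1 would also suffice.
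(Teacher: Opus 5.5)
Your proposal is correct and follows the paper's proof essentially step for step. Both arguments rest on the tightness identity $\service(\text{LA-B})=4\sum_r q(t_{\mathrm{ser}}(r),r)$ and the split $q(t_{\mathrm{ser}}(r),r)=\lambda\,\mathrm{pre}(r)+\mathrm{post}(r)/\lambda$, combined with $\mathrm{Pre}\le\delay(\ADV)$ and Lemma~\ref{lem:post-advice-waiting}, and both close with $4\lambda+1\le 4+\lambda$ for $\lambda\in(0,1]$.
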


\begin{proof}
We first bound the service cost of LA-B.
For each LA-B service of length $g$ at time $t$, tightness gives $g=4Q(t,g)$.
The service clears exactly the pending requests that contribute to $Q(t,g)$.
Each request contributes at its own service time only once.
Thus,
$$
\service(\text{LA-B})
=4\sum_r q(t_{\mathrm{ser}}(r),r).
$$
The definition of request urgency gives
$$
q(t_{\mathrm{ser}}(r),r)
=\lambda\text{pre}(r)+\frac{1}{\lambda}\text{post}(r).
$$
Therefore,
\begin{align*}
\service(\text{LA-B})&=4\lambda\mathrm{Pre}+\frac{4}{\lambda}\mathrm{Post}\\
&\le 4\lambda\delay(\ADV)+4\service(\ADV).
\end{align*}
The last inequality uses $\mathrm{Pre}\le\delay(\ADV)$ and Lemma~\ref{lem:post-advice-waiting}.

The delay identity and the same two bounds give
\begin{align*}
\delay(\text{LA-B})
&=\mathrm{Pre}+\mathrm{Post}\\
&\le\delay(\ADV)+\lambda\service(\ADV).
\end{align*}
The two cost bounds now give
\begin{align*}
\MoveEqLeft \cost(\text{LA-B}) \\
&\le (4\lambda+1)\delay(\ADV) +(4+\lambda)\service(\ADV)\\
&\le (4+\lambda)\cost(\ADV).
\end{align*}
The last inequality uses $4\lambda+1\le 4+\lambda$ for $\lambda\in(0,1]$.
Hence, the lemma follows.
\end{proof}

\section*{Proof of Theorem 3}
Fix a given instance $I$.
When RB serves length $b_j$ at time $t$, tightness gives
$$
W(t,b_j)=\frac{b_j}{e}.
$$
The service cost is $b_j$.
The cleared requests have total delay cost $b_j/e$.
Thus, this step costs
$$
b_j+\frac{b_j}{e}=(e+1)\frac{b_j}{e}.
$$
We call $b_j/e$ the base charge of this service.
It is enough to prove
\begin{equation}
\mathbb E_U\!\left[
\sum_{\substack{\text{\rm RB services}\\(t,b_j)}}\frac{b_j}{e}
\right]
\le \cost(\OPT).
\label{eq:supp-rb-base-charge}
\end{equation}

We now fix the offset $U$ and one RB service $(t_0,b_j)$.
We use the cover sequence for \textsc{Balance}.
We write $\ell(t)$ for the length served by RB at time $t$.
We build times
$$
t_k<t_{k-1}<\cdots<t_1<t_0
$$
backward from $t_0$.
The time $t_1$ is the last RB service before $t_0$.
For each $i\ge 1$, the time $t_{i+1}$ is the last earlier service whose length is larger than $\ell(t_i)$.
The sequence stops at the first $t_k$ with $\ell(t_k)\ge b_j$.
An added zero-cost service of infinite length before the first arrival makes $t_k$ well defined.

We set the cover threshold to $0$ on $(t_1,t_0]$.
We set the cover threshold to $\ell(t_i)$ on each interval $(t_{i+1},t_i]$.
We call an $\OPT$ service in $(t_k,t_0]$ large when its length is at least the cover threshold at its service time.
When a large $\OPT$ service exists, we choose the latest one among the longest such services.
We write this service as $(s^\star,A)$.
When no large $\OPT$ service exists, we set $A=0$.

For $A>0$, we use the notation from the main text:
\begin{align*}
H_U(A)&=\min\{i\in\mathbb Z:b_i>A\},\\
h&=\min\{j,H_U(A)\}.
\end{align*}
For $A=0$, we set $h=-\infty$ and $b_h=0$.
We also set $W(t,b_h)=0$ in this case.
We split the base charge as
$$
\frac{b_j}{e}
=\frac{b_j-b_h}{e}+\frac{b_h}{e}.
$$
We charge the first part to $\OPT$ delay cost.
When $A>0$, we charge the second part to the service $(s^\star,A)$.

\begin{lemma}
\label{lem:no-early-opt-service}
$\OPT$ does not serve before $t_0$ any request that RB clears at $t_0$ and whose location lies in $(b_h,b_j]$.
\end{lemma}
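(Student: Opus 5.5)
The argument mirrors the corresponding step in the robustness proof of LA-B (Lemma~\ref{lemma:lab-robustness}), adapted to the shifted grid $b_i=e^{i+U}$. Fix a request $r=(\tau,x,w)$ that RB clears at $t_0$ with $b_h<x\le b_j$. We argue by contradiction: suppose $\OPT$ has a service $(s,y)$ with $\tau\le s<t_0$ and $y\ge x$. The case $h=j$ needs no work, since $(b_j,b_j]$ is empty. So assume $h<j$. Then $h=H_U(A)$ when $A>0$, and $h=-\infty$ when $A=0$.

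First, $r$ arrives after $t_k$, that is, $\tau>t_k$. The service at $t_k$ has length $\ell(t_k)\ge b_j\ge x$. If $r$ had arrived by time $t_k$, that service would have cleared it, so $r$ could not still be pending at $t_0$. (The fictitious infinite service before the first arrival makes this hold even when $t_k$ is that dummy service.) Consequently $s\in(t_k,t_0)$, so $s$ lies in some interval $(t_{i+1},t_i]$, or in $(t_1,t_0)$.

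Next, the service $(s,y)$ is large. On $(t_1,t_0)$ the threshold is $0\le y$, so there is nothing to check. On $(t_{i+1},t_i]$ the threshold is $\ell(t_i)$, the length of an RB service at time $t_i\in[s,t_0)$. That service occurs at or after $\tau$, while $r$ is pending, and it does not clear $r$ because $r$ survives until $t_0$. Hence $\ell(t_i)<x\le y$.

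Since a large service exists, $A>0$, which rules out the case $h=-\infty$. In the remaining case $h=H_U(A)$, maximality of $A$ among large services gives $A\ge y\ge x$. But $b_{H_U(A)}>A$ by the definition of $H_U$, so $x\le A<b_h$, contradicting $x>b_h$.

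The main subtlety is pinning the threshold-comparison step to the exact conventions. We need $t_i\ge s$, so that the service at $t_i$ happens while $r$ is pending; this holds because the intervals are right-closed. We also need the tie-breaking choice of $(s^\star,A)$ as the latest longest large service to be irrelevant, since only the value of $A$ enters the argument.
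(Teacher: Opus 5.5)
Your proposal is correct and follows essentially the same route as the paper's proof: dispose of $h=j$, show $\tau>t_k$ via $\ell(t_k)\ge b_j\ge x$, show the hypothetical $\OPT$ service is large by comparing it with the cover thresholds, and derive the contradiction $A\ge y\ge x>b_h=b_{H_U(A)}>A$. Your extra remarks on right-closed intervals (so that $t_i\ge s\ge\tau$) and on the tie-breaking choice of $(s^\star,A)$ being irrelevant are accurate clarifications that the paper leaves implicit.
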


\begin{proof}
The interval $(b_h,b_j]$ is empty when $h=j$.
We therefore take the case $h<j$.
We choose a request $r=(\tau,x,w)$ that RB clears at $t_0$ and that has $x\in(b_h,b_j]$.

The arrival time of $r$ satisfies $\tau>t_k$.
Otherwise, the service at $t_k$ would clear $r$ because $\ell(t_k)\ge b_j\ge x$.
For a contradiction, we assume that $\OPT$ serves $r$ at a time $s<t_0$.
We write the $\OPT$ service as $(s,L)$.
This service has $L\ge x$ and $s\in(t_k,t_0)$.

The service $(s,L)$ is large.
If $s\in(t_1,t_0]$, then its cover threshold is $0$.
If $s\in(t_{i+1},t_i]$, then the RB service at $t_i$ has length less than $x$.
This fact holds because $r$ remains pending for RB until $t_0$.
Thus, $L\ge x>\ell(t_i)$ in the second case.
Both cases show that $(s,L)$ is large.

The existence of $(s,L)$ gives $A>0$.
The choice of $(s^\star,A)$ also gives $A\ge L$.
Since $h<j$, the definition of $h$ gives $h=H_U(A)$.
The definition of $H_U(A)$ then gives $b_h>A$.
On the other hand, $L\ge x>b_h$.
The inequalities $A\ge L>b_h>A$ give a contradiction.
Hence, the lemma follows.
\end{proof}

Tightness gives
$$
W(t_0,b_j)=\frac{b_j}{e}.
$$
The level invariant also gives
$$
W(t_0,b_h)\le\frac{b_h}{e}.
$$
The delay values grow continuously between services.
Thus, a level cannot move above its tight value without starting a service.
The second bound uses our zero convention when $h=-\infty$.
Thus, the requests in $(b_h,b_j]$ have total delay at least
$$
W(t_0,b_j)-W(t_0,b_h)
\ge\frac{b_j-b_h}{e}.
$$
Lemma~\ref{lem:no-early-opt-service} shows that these requests remain pending for $\OPT$ through time $t_0$.
Thus, $\OPT$ pays at least this much delay on them.
Requests charged by different RB services are disjoint because RB clears each request once.
Therefore, the total delay charge is at most $\delay(\OPT)$.

\begin{lemma}
\label{lem:no-duplicate-level}
For fixed $U$, each $\OPT$ service receives at most one service charge at any level $h$.
\end{lemma}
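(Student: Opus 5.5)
We follow the disjointness argument used for LA-B in Lemma~\ref{lemma:lab-robustness}, adapted to the randomly shifted grid. Fix $U$ and suppose, for contradiction, that two RB services at times $u<v$, of lengths $b_j$ and $b_{j'}$, both charge the same $\OPT$ service $(s^\star,A)$ at the same level $h$. That is, $h=\min\{j,H_U(A)\}=\min\{j',H_U(A)\}$. Since the $\OPT$ service is large for both cover sequences, it lies in $(t_k,u]$ for the earlier service and in $(t'_{k'},v]$ for the later one. In particular $s^\star\le u<v$.

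\textbf{Step 1: the later service is strictly longer.} We show $j'>j$. If $j'\le j$, then $\ell(u)=b_j\ge b_{j'}$. The backward cover sequence from $v$ stops at the first service of length at least $b_{j'}$. It passes through $t'_1$, the last service before $v$, and then through successively longer services. Since $u$ is a service in $[t'_1,v)$ or is skipped only in favor of a service of length exceeding something no larger than $\ell(u)$, the stopping time $t'_{k'}$ satisfies $t'_{k'}\ge u$. More precisely, the first service at or before which the sequence reaches length $\ge b_{j'}$ occurs no earlier than $u$. Hence the interval $(t'_{k'},v]$ excludes $s^\star\le u$, which is a contradiction. So $j'>j$.

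\textbf{Step 2: the shared service is long.} Since $u\in(t'_{k'},v)$, the time $u$ lies in some interval $(t'_{i+1},t'_i]$ or in $(t'_1,v]$ of the later sequence. There are two cases.
\begin{itemize}
\item If $u$ lies in $(t'_1,v]$, this contradicts $t'_1$ being the last service before $v$ (since $u$ is itself a service). So $u\le t'_1$.
\item If $u\in(t'_{i+1},t'_i]$, then the threshold there is $\ell(t'_i)$. By the definition of $t'_{i+1}$ as the last earlier service longer than $\ell(t'_i)$, a service at $u$ in this interval satisfies $\ell(u)\le\ell(t'_i)$.
\end{itemize}
Hence the threshold at $u$ is at least $b_j$. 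Thresholds are nondecreasing backward in time and $s^\star\le u$, so the threshold at $s^\star$ is at least $b_j$. Because the service is large for the later sequence, $A\ge b_j$.

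\textbf{Step 3: the levels differ.} From $A\ge b_j$ we get $H_U(A)>j$, so the earlier charge uses $h=j$. The later charge uses $h'=\min\{j',H_U(A)\}\ge j+1$, since both $j'>j$ and $H_U(A)>j$. Thus $h'\ne h$, contradicting the assumption. This proves the lemma.

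The delicate point is Step 1, namely that the later cover sequence stops no earlier than $u$ when $j'\le j$. We argue this by induction along the backward sequence: every $t'_i$ with $t'_i>u$ has $\ell(t'_i)<b_{j'}\le b_j$. The next element is the last service before $t'_i$ longer than $\ell(t'_i)$. Since $u$ itself qualifies and $\ell(u)=b_j>\ell(t'_i)$, this next element is at or after $u$. Once $u$ (or a later service of length $\ge b_{j'}$) is reached, the sequence stops.
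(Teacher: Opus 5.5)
Your proof is correct and follows the paper's argument step for step. You show $j'>j$ because otherwise the later cover sequence stops at or after $u\ge s^\star$, deduce $A\ge b_j$ from the threshold at $s^\star$ being at least $\ell(u)=b_j$, and conclude $h=j<h'$. Your backward induction in Step 1 and your case analysis in Step 2 make explicit two facts that the paper simply asserts.
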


\begin{proof}
We take two RB services $(t,b_j)$ and $(t',b_{j'})$ with $t<t'$.
We assume that both services charge the same $\OPT$ service $(s^\star,A)$.
We write their charged levels as $h$ and $h'$.
The earlier charge gives $s^\star\le t$.

We first show that $j'>j$.
If $j'\le j$, then the later cover sequence stops at or after $t$.
This sequence then cannot contain $s^\star\le t$.
This is impossible because the later service also charges $(s^\star,A)$.
Thus, $j'>j$.

The cover threshold at $s^\star$ in the later sequence is at least $b_j$.
This bound holds because the earlier RB service lies between $s^\star$ and $t'$.
It also holds because cover thresholds do not decrease as the sequence moves backward.
The service $(s^\star,A)$ is large for the later sequence.
Thus, $A\ge b_j$.

We write $m=H_U(A)$.
The inequalities $A\ge b_j$ and $b_m>A$ give $m>j$.
The earlier service uses level
$$
h=\min\{j,m\}=j.
$$
The later service uses level
$$
h'=\min\{j',m\}>j=h.
$$
Thus, the two charges use different levels.
The same argument applies to every pair of charges.
Hence, the lemma follows.
\end{proof}

For a fixed $U$, Lemma~\ref{lem:no-duplicate-level} bounds the total charge to an $\OPT$ service of length $A>0$ by
$$
\sum_{i\le H_U(A)}\frac{b_i}{e}.
$$
Every charged level is at most $H_U(A)$.
The lemma also shows that each level appears at most once.

\begin{lemma}
\label{lem:random-bidding}
For every fixed $A>0$, the following identity holds:
$$
\mathbb E_U\!\left[
\sum_{i\le H_U(A)}\frac{b_i}{e}
\right]=A.
$$
\end{lemma}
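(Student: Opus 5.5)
Write $A=e^{a}$ with $a\in\mathbb R$, and compute the sum $\sum_{i\le H_U(A)} b_i$ in closed form for each fixed $U$. Then average over $U\sim\mathrm{Unif}[0,1]$.

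\emph{The sum for fixed $U$.} By definition, $H_U(A)$ is the smallest integer $i$ with $i+U>a$, so $b_{H_U(A)}$ is the first level strictly above $A$. Set $z=H_U(A)+U-a$, so that $b_{H_U(A)}=Ae^{z}$. The condition $i+U>a$ forces $z>0$, and minimality of $H_U(A)$ gives $H_U(A)-1+U\le a$, i.e.\ $z\le 1$. Hence $z\in(0,1]$. The levels with index at most $H_U(A)$ form the geometric series $b_{H_U(A)}\sum_{m\ge 0}e^{-m}$, which gives
$$
\sum_{i\le H_U(A)}\frac{b_i}{e}=\frac{Ae^{z}}{e}\cdot\frac{1}{1-e^{-1}}=\frac{Ae^{z}}{e-1}.
$$

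\emph{Distribution of $z$.} As $U$ ranges over $[0,1)$, the quantity $z$ is $U-a$ shifted into the interval $(0,1]$, so it equals $U-a$ modulo $1$. Since translation modulo $1$ preserves the uniform distribution on the circle $\mathbb R/\mathbb Z$, $z$ is uniform on $(0,1]$ up to a null set. I would check this carefully in the two cases $U\le\{a\}$ and $U>\{a\}$, where $\{a\}$ is the fractional part of $a$: in each case $z$ is an affine translate of $U$, and the two image pieces tile $(0,1]$.

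\emph{Conclusion.} Combining the two steps,
$$
\mathbb E_U\Big[\sum_{i\le H_U(A)}\frac{b_i}{e}\Big]=\frac{A}{e-1}\int_0^1 e^{z}\,dz=\frac{A}{e-1}(e-1)=A,
$$
which matches the identity stated in the main text. The only real care point is the distribution step: the strict inequality in the definition of $H_U(A)$ determines whether $z$ lies in $(0,1]$ or in $[0,1)$. This difference affects only a measure-zero event and therefore does not change the expectation.
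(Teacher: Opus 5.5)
Your proposal is correct and follows essentially the same route as the paper. Both arguments set $z=H_U(A)+U-\ln A$, observe that $b_{H_U(A)}=Ae^{z}$ with $z$ uniform on the unit interval up to a null set, sum the geometric series to get $Ae^{z}/(e-1)$, and integrate $e^{z}$ over $[0,1]$.
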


\begin{proof}
We write $\ell=\ln A$ and
$$
Z=H_U(A)+U-\ell.
$$
The index $H_U(A)$ is the first integer $i$ with $i+U>\ell$.
Thus, $Z$ is uniform on $[0,1]$, apart from an event of probability zero.
The first level above $A$ satisfies $b_{H_U(A)}=Ae^Z$.
The levels form a geometric sequence with ratio $e$.
Therefore,
$$
\sum_{i\le H_U(A)}\frac{b_i}{e}
=\frac{b_{H_U(A)}}{e-1}.
$$
We now take the expectation:
\begin{align*}
\mathbb E_U\!\left[
\sum_{i\le H_U(A)}\frac{b_i}{e}
\right]
&=\frac{A}{e-1}\int_0^1 e^z\,dz=A.
\end{align*}
Hence, the lemma follows.
\end{proof}

\begin{proof}[Proof of Theorem 3]
The delay-charge bound holds for every fixed offset $U$.
It gives a total delay charge of at most $\delay(\OPT)$.
Lemmas~\ref{lem:no-duplicate-level} and~\ref{lem:random-bidding} give an expected service charge of at most $A$ for each $\OPT$ service of length $A$.
The sum over all $\OPT$ services is at most $\service(\OPT)$.
The two bounds give
\begin{align*}
\mathbb E_U\!\left[
\sum_{\substack{\text{\rm RB services}\\(t,b_j)}}\frac{b_j}{e}
\right]
&\le\delay(\OPT)+\service(\OPT)\\
&=\cost(\OPT).
\end{align*}
This bound proves~\eqref{eq:supp-rb-base-charge}.

Each RB service costs $e+1$ times its base charge.
Therefore,
\begin{align*}
\mathbb E_U[\cost(\mathrm{RB})]
&=(e+1)\mathbb E_U\!\left[
\sum_{\substack{\text{\rm RB services}\\(t,b_j)}}\frac{b_j}{e}
\right]\\
&\le(e+1)\cost(\OPT).
\end{align*}
Hence, the theorem follows.
\end{proof}

\section*{Proof of Theorem 4}
By Yao's principle~\cite{yao1977probabilistic}, it suffices to construct a distribution over instances under which the expected cost of every deterministic online algorithm is at least $(e-o(1))$ times the expected offline optimum.
Fix $\varepsilon\in(0,1]$, an integer $N$ satisfying $(N+1)\varepsilon\geq 1$, and $R>\sqrt e$.
Let $K_1,\ldots,K_M$ be independent random variables taking values in $\{0,\ldots,N\}$ and satisfying
\begin{equation*}
\Pr[K_p\geq i]=e^{-i\varepsilon},\qquad i=0,\ldots,N.
\end{equation*}
Equivalently, $\Pr[K_p=k]=e^{-k\varepsilon}(1-e^{-\varepsilon})$ for $k<N$ and $\Pr[K_p=N]=e^{-N\varepsilon}$, so this is a well-defined distribution.
Starting from $T_1=0$, define $L_p=R^{2K_p}$ and $T_{p+1}=T_p+L_p$ for $p=1,\ldots,M$.
The instance $I$ consists of the $M$ consecutive intervals $[T_p,T_{p+1}]$.
At time $T_p$, requests
\begin{equation*}
\begin{aligned}
r_i^p&=(T_p,x_i^p,w_i^p),&x_i^p&=e^{i\varepsilon},\\
w_i^p&=R^{1-2i},&i&=0,\ldots,N,
\end{aligned}
\end{equation*}
arrive.
Associate with $r_i^p$ the relative artificial deadline $d_i^p=R^{2i}/2$.
Because $R^2>e>2$ and $L_p=R^{2K_p}$, the time $T_p+d_i^p$ lies strictly inside the $p$-th interval if and only if $i\leq K_p$.

We first consider a deterministic algorithm $\mathcal{A}$ with the following property.
\begin{property}\label{appendixprop:deadline}
For each interval $[T_p,T_{p+1}]$ and each $j\leq K_p$, algorithm $\mathcal{A}$ serves $r_j^p$ no later than time $T_p+d_j^p$.
\end{property}
Let $G_{N,\varepsilon}=e^{1-\varepsilon}(N+1)\varepsilon$.
\begin{lemma}
If deterministic algorithm $\mathcal{A}$ satisfying Property~\ref{appendixprop:deadline}, then $\mathbb{E}[\cost(\mathcal{A})]\geq MG_{N,\varepsilon}$.
\end{lemma}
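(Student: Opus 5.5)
The plan is to prove a per-interval bound and then sum over intervals. Specifically, I will show that for every $p$ and every fixed history before $T_p$, the conditional expected service cost of $\mathcal{A}$ inside $[T_p,T_{p+1})$ is at least $G_{N,\varepsilon}$. Delay costs are nonnegative and can be dropped. Since the $K_p$ are independent, $K_p$ is independent of everything $\mathcal{A}$ has seen before $T_p$, so linearity of expectation over $p=1,\ldots,M$ then gives $\mathbb{E}[\cost(\mathcal{A})]\ge MG_{N,\varepsilon}$.

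\emph{Step 1: reduce to a fixed schedule.} Fix $p$ and the history up to $T_p$. Inside the interval, the only information revealed to $\mathcal{A}$ before $T_p+L_p$ is that the interval has not ended yet. Hence the deterministic $\mathcal{A}$ determines a single schedule $\sigma$ of services at relative times $t_1<t_2<\cdots$ with lengths $y_1,y_2,\ldots$: this is what it would do if the interval never ended. The actual behaviour of $\mathcal{A}$ is the prefix of $\sigma$ before relative time $L_p=R^{2K_p}$. So the conditional expected service cost is
$$\sum_s y_s\Pr[R^{2K_p}>t_s].$$
If $t_s\le d_i=R^{2i}/2<R^{2i}$, then $\Pr[R^{2K_p}>t_s]\ge\Pr[K_p\ge i]=e^{-i\varepsilon}$.

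\emph{Step 2: extract forced services.} The event $K_p=N$ has positive probability and the schedule $\sigma$ does not depend on $K_p$. Property~\ref{appendixprop:deadline} applied on that event therefore shows that, for every $i\le N$, $\sigma$ covers $x_i$ (serves a length at least $e^{i\varepsilon}$) by relative time $d_i$.

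I then build blocks greedily. Set $i_0=0$. Given $i_k\le N$, let $s_k$ be the first service in $\sigma$ with length at least $x_{i_k}$. By Step 2, $t_{s_k}\le d_{i_k}$. Let $m_k$ be the largest index with $x_{m_k}\le y_{s_k}$, and set $i_{k+1}=m_k+1$. The services $s_k$ are distinct, since their lengths strictly increase past successive thresholds. The indices $i_0<i_1<\cdots$ partition $\{0,\ldots,N\}$ into blocks of lengths $n_k=i_{k+1}-i_k\ge 1$, with $\sum_k n_k=N+1$.

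By Step 1, service $s_k$ contributes at least
$$e^{-i_k\varepsilon}\,e^{m_k\varepsilon}=e^{(n_k-1)\varepsilon}.$$

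\emph{Step 3: the elementary inequality.} Using $e^{y}\ge e\,y$ for all $y$, with $y=n_k\varepsilon$, we get
$$e^{(n_k-1)\varepsilon}=e^{-\varepsilon}e^{n_k\varepsilon}\ge e^{1-\varepsilon}n_k\varepsilon.$$
Summing over the blocks gives at least $e^{1-\varepsilon}(N+1)\varepsilon=G_{N,\varepsilon}$, as required.

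The step needing the most care is Step 1. I must argue rigorously that $\mathcal{A}$'s behaviour inside an interval is a fixed schedule truncated at the random end time $L_p$. I must also handle requests from the next interval: they arrive exactly at $T_{p+1}$ and carry the information that the interval has ended, so I count only services at relative times strictly before $L_p$. Finally, the bound $\Pr[R^{2K_p}>t]\ge e^{-i\varepsilon}$ for $t\le d_i$ relies on $d_i<R^{2i}$, which holds since $d_i=R^{2i}/2$.
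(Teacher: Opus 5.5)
Your proposal is correct and follows essentially the same argument as the paper. Both proofs condition on the history before $T_p$ and use the $K_p=N$ execution together with Property~\ref{appendixprop:deadline} to greedily extract distinct services covering consecutive index blocks. Each selected service is then lower-bounded by $\Pr[K_p\ge i_k]\,e^{m_k\varepsilon}=e^{(n_k-1)\varepsilon}$, and $e^z\ge ez$ finishes the argument. Your ``fixed schedule truncated at $L_p$'' view is just a restatement of the paper's observation that the deterministic algorithm performs the same selected service whenever $K_p\ge a_s$.
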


\begin{proof}
To prove the claim, fix an interval $p$ and condition on the complete history $\mathcal{H}_p$ before $T_p$.
The variable $K_p$ remains independent of this history.
Consider the execution in which $K_p=N$, and order the services performed in the $p$-th interval chronologically, breaking ties by their execution order.
We select a sequence of services and a sequence of integer blocks as follows.
Set $a_1=0$.
Given $a_s$, let the $s$-th selected service be the first service in this execution whose length $\ell_s$ is at least $x_{a_s}^p$, and define $b_s=\max\{i\in\{0,\ldots,N\}:x_i^p\leq\ell_s\}$.
If $b_s=N$, stop; otherwise, set $a_{s+1}=b_s+1$ and continue.
Property~\ref{appendixprop:deadline} guarantees that the $s$-th selected service exists and occurs by time $T_p+d_{a_s}^p$.
Moreover, the selected services are distinct, and the blocks $[a_s,b_s]$ form a partition of $\{0,\ldots,N\}$.

For every realization with $K_p\geq a_s$, we have
\begin{equation*}
d_{a_s}^p=\frac{R^{2a_s}}{2}<R^{2a_s}\leq R^{2K_p}=L_p.
\end{equation*}
Hence no new request batch arrives before the time at which the $s$-th selected service is performed.
The input observed up to that time is therefore identical to the input in the execution with $K_p=N$.
Because $\mathcal{A}$ is deterministic, the same service of length $\ell_s$ is performed whenever $K_p\geq a_s$.
Consequently, if $\service_p(\mathcal{A})$ denotes the total service cost incurred strictly before $T_{p+1}$ in the $p$-th interval, then
\begin{align*}
\mathbb{E}[\service_p(\mathcal{A})\mid\mathcal{H}_p]
&\geq\sum_s\Pr[K_p\geq a_s]\ell_s\\
&\geq\sum_s e^{-a_s\varepsilon}e^{b_s\varepsilon}=\sum_s e^{(b_s-a_s)\varepsilon}.
\end{align*}
For $n_s=b_s-a_s+1$, the elementary inequality $e^z\geq ez$ for $z>0$ gives $e^{(b_s-a_s)\varepsilon}=e^{-\varepsilon}e^{n_s\varepsilon}\geq e^{1-\varepsilon}n_s\varepsilon$.
Since $\sum_s n_s=N+1$, the preceding two bounds imply
\begin{equation*}
\mathbb{E}[\service_p(\mathcal{A})\mid\mathcal{H}_p]\geq e^{1-\varepsilon}(N+1)\varepsilon=G_{N,\varepsilon}.
\end{equation*}
Removing the conditioning and summing over $p$ proves $\mathbb{E}[\cost(\mathcal{A})]\geq MG_{N,\varepsilon}$, because the intervals are disjoint and total cost is at least total service cost.
\end{proof}

We now prove Property~\ref{appendixprop:deadline} as follows.
\begin{lemma}\label{lem:cost-A}
For any deterministic algorithm $\mathcal{A}$, 
$$
\mathbb{E}[\cost(\mathcal{A})]\geq\frac{RM}{R+2e^{N\varepsilon}}G_{N,\varepsilon}.
$$
\end{lemma}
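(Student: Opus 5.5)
The plan is to build, from an arbitrary deterministic $\mathcal{A}$, an auxiliary deterministic algorithm $\mathcal{A}'$ that satisfies Property~\ref{appendixprop:deadline}. Then I will combine the previous lemma applied to $\mathcal{A}'$ with a direct lower bound on the delay cost of $\mathcal{A}$.

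\textbf{Construction of $\mathcal{A}'$.} Algorithm $\mathcal{A}'$ simulates $\mathcal{A}$ and copies all of its services. In each interval $p$, at the first time $T_p+d_j^p$ at which $\mathcal{A}$ has not yet served a request $r_j^p$ with $j\le K_p$, algorithm $\mathcal{A}'$ adds one service of length $x_N^p=e^{N\varepsilon}$. This clears every pending request of batch $p$, and $\mathcal{A}'$ adds no further extra services in that interval.

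This is online and deterministic. The deadline $T_p+d_j^p$ is a fixed time, and the check only requires knowing whether the interval is still ongoing at that time, which is observable. Every earlier batch was already served by $\mathcal{A}'$ by its deadlines, and the extra service clears batch $p$ entirely. Hence $\mathcal{A}'$ serves each $r_j^p$ with $j\le K_p$ by $T_p+d_j^p$, so Property~\ref{appendixprop:deadline} holds. Adding services only reduces delay, so
\[
\cost(\mathcal{A}')\le\cost(\mathcal{A})+\sum_p e^{N\varepsilon}\mathbf{1}[\text{$\mathcal{A}$ misses in interval $p$}].
\]
Taking expectations and applying the previous lemma to $\mathcal{A}'$ gives
\[
\mathbb{E}[\cost(\mathcal{A})]+e^{N\varepsilon}\sum_p\rho_p\ge MG_{N,\varepsilon}.
\]

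\textbf{Delay lower bound.} If $\mathcal{A}$ misses the deadline of $r_j^p$, that request has waited at least $d_j^p$. It therefore incurs delay at least $w_j^p d_j^p=R^{1-2j}R^{2j}/2=R/2$. These delay contributions are attributed to distinct requests across intervals, so $\mathbb{E}[\cost(\mathcal{A})]\ge (R/2)\sum_p\rho_p$.

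\textbf{Combining the two bounds.} Write $C=\mathbb{E}[\cost(\mathcal{A})]$ and $S=\sum_p\rho_p$. From the delay bound, $S\le 2C/R$. Substituting into the first inequality gives
\[
C\Bigl(1+\frac{2e^{N\varepsilon}}{R}\Bigr)\ge MG_{N,\varepsilon},
\]
which rearranges to the stated bound $C\ge \frac{RM}{R+2e^{N\varepsilon}}G_{N,\varepsilon}$.

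\textbf{Main obstacle.} The delicate step is making sure $\mathcal{A}'$ really satisfies Property~\ref{appendixprop:deadline} and applies the previous lemma legitimately. This requires three checks. First, the extra service must fire only when the deadline lies inside the interval, which is exactly the case $j\le K_p$. Second, the added services must not interfere with the conditioning argument of the previous lemma. That lemma only uses that $\mathcal{A}'$ is deterministic and online, and both hold here. Third, the count of extra services must be at most one per interval, which is what keeps the added cost at $e^{N\varepsilon}$ per miss.
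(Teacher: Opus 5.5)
Your proposal is correct and follows essentially the same route as the paper. It uses the same auxiliary algorithm $\mathcal{A}'$, which adds one service of length $x_N^p$ at the first missed in-interval deadline. It uses the same $R/2$ delay bound per interval with a miss, and the same algebra combining $C+e^{N\varepsilon}\sum_p\rho_p\ge MG_{N,\varepsilon}$ with $\sum_p\rho_p\le 2C/R$.

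One minor point, which the paper also glosses over: in the last interval no batch arrives at $T_{M+1}$, so whether $j\le K_M$ is not observable online. This is harmless, because $\mathcal{A}'$ can fire at the first missed $d_j^M$ for any $j$, and the bound $w_j^M d_j^M=R/2$ still applies on that event.
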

\begin{proof}
Let $E_p$ be the event that $\mathcal{A}$ misses at least one artificial deadline that lies in interval $p$.
Write $\rho_p=\Pr[E_p]$.
If $\mathcal{A}$ misses the deadline of $r_j^p$, then this request alone incurs delay cost at least $w_j^pd_j^p=R^{1-2j}R^{2j}/2=R/2$.
Choosing one missed request from each interval in which $E_p$ occurs and using the additivity of delay costs, we obtain
\begin{equation}\label{appendixeq:dcost-A}
\mathbb{E}[\cost(\mathcal{A})]\geq\mathbb{E}[\delay(\mathcal{A})]\geq\frac{R}{2}\sum_{p=1}^M\rho_p.
\end{equation}

Construct an auxiliary online algorithm $\mathcal{A}'$ that runs an internal simulation of $\mathcal{A}$ and copies all of its services.
At the first artificial deadline missed by the simulated algorithm in interval $p$, algorithm $\mathcal{A}'$ additionally performs a service of length $x_N^p=e^{N\varepsilon}$.
At such a deadline the interval has not ended, so this action uses only information available online.
The additional service covers every request released at $T_p$; hence $\mathcal{A}'$ performs at most one additional service per interval and satisfies Property~\ref{appendixprop:deadline}.
Its additional services can only reduce delay cost, and thus, for every realization,
\begin{align*}
\cost(\mathcal{A}')\leq{}&\cost(\mathcal{A})+e^{N\varepsilon}\sum_{p=1}^M\mathbf{1}_{E_p}.
\end{align*}
Taking expectations and applying the preceding lower bound to $\mathcal{A}'$ gives
\begin{align*}
\mathbb{E}[\cost(\mathcal{A})]+e^{N\varepsilon}\sum_{p=1}^M\rho_p
\geq\mathbb{E}[\cost(\mathcal{A}')]
\geq MG_{N,\varepsilon}.
\end{align*}
Let $C=\mathbb{E}[\cost(\mathcal{A})]$.
Inequality~\eqref{appendixeq:dcost-A} implies $\sum_p\rho_p\leq 2C/R$.
Substituting this bound into the last display yields
\begin{equation}%\label{eq:cost-A}
\mathbb{E}[\cost(\mathcal{A})] \geq \frac{RM}{R+2e^{N\varepsilon}} \cdot G_{N,\varepsilon},
\end{equation}
which proves this lemma.
\end{proof}

It remains to upper-bound the expected offline OPT.
\begin{lemma}\label{lem:cost-opt}
$\mathbb{E}[\cost(\OPT)]\leq MU_{N,\varepsilon,R}+e^{N\varepsilon}$, where $U_{N,\varepsilon,R}=1+N(1-e^{-\varepsilon})+NR\frac{1-e^{-\varepsilon}}{e^{-\varepsilon}R^2-1}$.
\end{lemma}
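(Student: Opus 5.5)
We bound $\cost(\OPT)$ by the cost of the explicit offline algorithm $\mathcal{B}$ from the main text. At each $T_p$, $\mathcal{B}$ serves length $x^p_{K_p}=e^{K_p\varepsilon}$, and at $T_{M+1}$ it serves $x^M_N$. This makes $\mathcal{B}$ feasible. Every request $r^p_j$ with $j\le K_p$ is cleared at its arrival with zero delay. Every request $r^p_j$ with $j>K_p$ stays pending until the next service that reaches $x^p_j$. Since $\mathbb{E}[\cost(\OPT)]\le\mathbb{E}[\cost(\mathcal{B})]$, it suffices to bound $\mathbb{E}[\service(\mathcal{B})]$ and $\mathbb{E}[\delay(\mathcal{B})]$ separately.

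For the service cost, we use the tail formula. Writing $e^{K\varepsilon}=1+\sum_{i=1}^{N}(e^{i\varepsilon}-e^{(i-1)\varepsilon})\mathbf 1[K\ge i]$ and using $\Pr[K_p\ge i]=e^{-i\varepsilon}$ gives
\[
\mathbb{E}[x^p_{K_p}]=1+\sum_{i=1}^N(1-e^{-\varepsilon})=1+N(1-e^{-\varepsilon}).
\]
Summing over $p$ and adding the final service $e^{N\varepsilon}$ yields the first two terms of $MU_{N,\varepsilon,R}$ plus $e^{N\varepsilon}$.

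The delay cost is the main step; we charge it interval by interval. A request $r^p_j$ with $j>K_p$ is served at the first later time $T_q$ with $K_q\ge j$, or at $T_{M+1}$. It therefore waits $\sum_{q=p}^{q^*-1}L_q$. We bound this waiting time by the simple sum over subsequent intervals, $\sum_{q\ge p}R^{2K_q}\mathbf 1[K_{p+1},\dots,K_q<j]$, where the factor for $q=p$ uses $K_p<j$. The weight of $r^p_j$ is $R^{1-2j}$, so its expected delay is at most
\[
R^{1-2j}\sum_{m\ge 0}\mathbb{E}\bigl[R^{2K}\mathbf 1[K<j]\bigr]\Pr[K<j]^{m}.
\]
This argument is only loose, and independence of the $K_q$ is what makes it work. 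Here $\mathbb{E}[R^{2K}\mathbf 1[K<j]]=\sum_{k<j}(1-e^{-\varepsilon})e^{-k\varepsilon}R^{2k}$, which is at most $(1-e^{-\varepsilon})R^{2j}e^{-j\varepsilon}/(e^{-\varepsilon}R^2-1)$ as a geometric sum with ratio $e^{-\varepsilon}R^2>1$. This ratio exceeds $1$ because $R>\sqrt e\ge e^{\varepsilon/2}$.

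The factor $\sum_m\Pr[K<j]^m=e^{j\varepsilon}$ cancels the $e^{-j\varepsilon}$. Hence each $r^p_j$ contributes at most $R(1-e^{-\varepsilon})/(e^{-\varepsilon}R^2-1)$. Summing over $j=1,\dots,N$ and $p=1,\dots,M$ gives the third term of $MU_{N,\varepsilon,R}$, which is \eqref{eq:dcost-B}.

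The main obstacle is controlling the waiting across several consecutive short intervals. The geometric-series cancellation $\Pr[K\ge j]^{-1}=e^{j\varepsilon}$ resolves it. We must also check that truncation at $T_{M+1}$ only helps, which holds because it shortens waiting and costs $e^{N\varepsilon}$ once.
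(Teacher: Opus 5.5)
Your proposal is correct and follows the paper's proof essentially step for step. You use the same offline schedule $\mathcal{B}$, the same tail-sum computation $\mathbb{E}[e^{\varepsilon K}]=1+N(1-e^{-\varepsilon})$, and the same waiting-time decomposition, whose expectation is $A_j\sum_{m\ge 0}\Pr[K<j]^m\le A_j e^{j\varepsilon}$; the only cosmetic difference is that you drop the $-1$ in the geometric sum for $A_j$ first, whereas the paper keeps the factor $1-(\theta R^2)^{-j}$ and drops it at the end.
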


\begin{proof}
Consider the offline algorithm $\mathcal{B}$ that performs a service of length $x_{K_p}^p$ at time $T_p$ for every $p=1,\ldots,M$, and performs a final service of length $x_N^M$ at time $T_{M+1}$.
The final service makes $\mathcal{B}$ feasible.
For any random variable $K$ distributed as the $K_p$'s, the tail-sum identity gives
\begin{align*}
\mathbb{E}[e^{\varepsilon K}]
&=1+\sum_{i=1}^N\bigl(e^{i\varepsilon}-e^{(i-1)\varepsilon}\bigr)\Pr[K\geq i]\\
&=1+\sum_{i=1}^N\bigl(e^{i\varepsilon}-e^{(i-1)\varepsilon}\bigr)e^{-i\varepsilon}=1+N(1-e^{-\varepsilon}).
\end{align*}
Therefore,
\begin{equation}\label{appendixeq:scost-B}
\mathbb{E}[\service(\mathcal{B})]=M+MN(1-e^{-\varepsilon})+e^{N\varepsilon}.
\end{equation}

We next bound the delay cost of $\mathcal{B}$.
Set $\theta=e^{-\varepsilon}$.
Fix $p$ and $j\in\{1,\ldots,N\}$, and let $D_{p,j}$ be the waiting time of request $r_j^p$ under $\mathcal{B}$.
At time $T_q$, this request is served if $K_q\geq j$; otherwise, it remains pending throughout interval $q$ and accumulates an additional delay of $L_q$.
The final service at $T_{M+1}$ therefore gives the identity
\begin{equation*}
D_{p,j}=\sum_{q=p}^M R^{2K_q}\mathbf{1}\{K_h<j\text{ for every }h=p,\ldots,q\}.
\end{equation*}
Let $A_j=\mathbb{E}[R^{2K}\mathbf{1}_{\{K<j\}}]=(1-\theta)\sum_{k=0}^{j-1}(\theta R^2)^k$.
Since $\Pr[K<j]=1-\theta^j$ and the variables $K_p$ are independent, the waiting-time identity implies
\begin{align*}
\mathbb{E}[D_{p,j}]
&=A_j\sum_{q=p}^M(1-\theta^j)^{q-p}\leq\frac{A_j}{\theta^j}.
\end{align*}
The assumptions $R>\sqrt e$ and $\varepsilon\leq 1$ imply $\theta R^2>1$.
Using the geometric-sum formula for $A_j$, we obtain
\begin{align*}
w_j^p\mathbb{E}[D_{p,j}]
&\leq R^{1-2j}\theta^{-j}(1-\theta)\frac{(\theta R^2)^j-1}{\theta R^2-1}\\
&=\frac{R(1-\theta)}{\theta R^2-1}\left(1-(\theta R^2)^{-j}\right)\leq\frac{R(1-\theta)}{\theta R^2-1}.
\end{align*}
The request $r_0^p$ is served immediately because $K_p\geq0$ with probability one.
Summing the last bound over the remaining $N$ requests in each of the $M$ intervals yields
\begin{equation}\label{appendixeq:dcost-B}
\mathbb{E}[\delay(\mathcal{B})]\leq MNR\frac{1-e^{-\varepsilon}}{e^{-\varepsilon}R^2-1}.
\end{equation}
Since $\OPT$ is no more expensive than the feasible solution $\mathcal{B}$, equations~\eqref{appendixeq:scost-B} and~\eqref{appendixeq:dcost-B} give
\begin{equation*}
U_{N,\varepsilon,R}=1+N(1-e^{-\varepsilon})+NR\frac{1-e^{-\varepsilon}}{e^{-\varepsilon}R^2-1}
\end{equation*}
and
\begin{equation*}
\mathbb{E}[\cost(\OPT)]\leq MU_{N,\varepsilon,R}+e^{N\varepsilon},
\end{equation*}
which proves this lemma.
\end{proof}

\begin{proof}[Proof of Theorem 4]
Combining Lemma~\ref{lem:cost-A} and Lemma~\ref{lem:cost-opt}, Yao's principle implies that the competitive ratio of any randomized online algorithm against an oblivious adversary is at least
\begin{equation*}
\frac{RM G_{N,\varepsilon}}{(R+2e^{N\varepsilon})(MU_{N,\varepsilon,R}+e^{N\varepsilon})}.
\end{equation*}
For fixed $N$ and $\varepsilon$, first letting $M\to\infty$ and then $R\to\infty$ makes the last display converge to
\begin{equation}%\label{eq:lower-bound}
\frac{e^{1-\varepsilon}(N+1)\varepsilon}{1+N(1-e^{-\varepsilon})}.
\end{equation}
Letting $N\to\infty$ and then $\varepsilon\to0$ gives
\begin{equation*}
\lim_{\varepsilon\to0}\lim_{N\to\infty}\frac{e^{1-\varepsilon}(N+1)\varepsilon}{1+N(1-e^{-\varepsilon})}
=\lim_{\varepsilon\to0}\frac{e^{1-\varepsilon}\varepsilon}{1-e^{-\varepsilon}}
=e.
\end{equation*}
Thus, for every $c<e$, the parameters can be chosen so that the distribution above forces expected competitive ratio greater than $c$ for every deterministic algorithm.
Thus no randomized online algorithm has competitive ratio below $e$ against an oblivious adversary.
\end{proof}

\section*{Proof of Theorem 5}
Fix a given instance $I$ and a feasible advice $\ADV$.
The oblivious adversary chooses both sequences before the random offset $U$.
The proof combines the LA-B charging argument with the RB random-grid argument.

For every pending request $r=(\tau,x,w)$, the request urgency satisfies
$$
\lambda w(t-\tau)
\le q(t,r)
\le \frac{1}{\lambda}w(t-\tau).
$$
The definition of $q(t,r)$ gives these bounds.
The pre-coverage part grows at rate $\lambda w$.
The post-coverage part grows at rate $w/\lambda$.

\paragraph{Robustness.}
We first bound the cost of one LA-RB service.

\begin{lemma}
\label{lem:random-one-service-cost}
If LA-RB serves length $b_j$ at time $t$, then its local cost is at most $b_j+b_j/(e\lambda)$.
\end{lemma}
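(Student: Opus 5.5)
The plan mirrors Lemma~\ref{lem:one-service-cost}, with the dyadic threshold $g/4$ replaced by the randomly shifted threshold $b_j/e$.

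\emph{Identify the cleared set.} Fix the offset $U$ and an LA-RB service of length $b_j$ at time $t$. The service clears exactly the requests $r=(\tau,x,w)$ that are pending at $t$ with $x\le b_j$. These are precisely the requests contributing to the level potential $Q(t,b_j)$.

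\emph{Bound the delay of the cleared requests.} For each such request, the lower bound $q(t,r)\ge\lambda w(t-\tau)$ stated at the start of this section gives
$$
w(t-\tau)\le\frac{1}{\lambda}q(t,r).
$$
Summing over the cleared requests yields
$$
\sum_{r\text{ cleared at }t}w(t-\tau)\le\frac{1}{\lambda}Q(t,b_j).
$$

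\emph{Apply tightness.} LA-RB performs this service only because $b_j$ is tight, so $Q(t,b_j)=b_j/e$. Hence the delay cost of the cleared requests is at most $b_j/(e\lambda)$.

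\emph{Add the service cost.} The service itself costs $b_j$. The local cost is therefore at most
$$
b_j+\frac{b_j}{e\lambda}.
$$

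Nothing here depends on $U$ beyond which level is served, so the bound holds pointwise for every offset. The only point needing care is making precise that "local cost" means the service cost plus the delay of exactly the requests cleared at $t$. Then every request's delay is counted once, at its unique clearing service, and these local costs partition $\cost(\text{LA-RB})$. This partition is what the subsequent robustness argument needs. I expect no real obstacle.
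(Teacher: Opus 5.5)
Your proof is correct and follows the paper's argument: the cleared requests are exactly those contributing to $Q(t,b_j)$, and the lower urgency bound $q(t,r)\ge\lambda w(t-\tau)$ together with tightness $Q(t,b_j)=b_j/e$ bounds their delay by $b_j/(e\lambda)$, to which the service cost $b_j$ is added. Your remark that the bound holds for every fixed $U$, and that these local costs partition $\cost(\text{LA-RB})$, is also how the paper uses the lemma in the robustness proof.
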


\begin{proof}
Tightness gives $Q(t,b_j)=\frac{b_j}{e}$.
The service clears exactly the pending requests with location at most $b_j$.
The lower urgency bound gives
$$
\sum_{\substack{r\text{ cleared}\\\text{at time }t}}w(t-\tau)
\le \frac{1}{\lambda}
\sum_{\substack{r\text{ cleared}\\\text{at time }t}}q(t,r)
=\frac{b_j}{e\lambda}.
$$
The service itself costs $b_j$.
Hence, the lemma follows.
\end{proof}

We call $\lambda b_j/e$ the robust base charge of this service.
We next charge this amount to $\OPT$.
We fix the offset $U$ and one LA-RB service $(t_0,b_j)$.
We write $\ell(t)$ for the length served by LA-RB at time $t$.
We build a cover sequence
$$
t_k<t_{k-1}<\cdots<t_1<t_0
$$
backward from $t_0$.
The time $t_1$ is the last LA-RB service before $t_0$.
For each $i\ge1$, the time $t_{i+1}$ is the last earlier service whose length is larger than $\ell(t_i)$.
The sequence stops at the first $t_k$ with $\ell(t_k)\ge b_j$.
An added zero-cost service of infinite length before the first arrival makes $t_k$ well defined.

We set the cover threshold to $0$ on $(t_1,t_0]$.
We set the cover threshold to $\ell(t_i)$ on each interval $(t_{i+1},t_i]$.
We call an $\OPT$ service in $(t_k,t_0]$ large when its length is at least the cover threshold at its service time.
When a large $\OPT$ service exists, we choose the latest one among the longest such services.
We write this service as $(s^\star,A)$.
When no large $\OPT$ service exists, we set $A=0$.

For $A>0$, we use
\begin{align*}
H_U(A)&=\min\{i\in\mathbb Z:b_i>A\},\\
h&=\min\{j,H_U(A)\}.
\end{align*}
For $A=0$, we set $h=-\infty$, $b_h=0$, and $Q(t,b_h)=0$.
We split the robust base charge as
$$
\lambda\frac{b_j}{e}
=\lambda\frac{b_j-b_h}{e}
+\lambda\frac{b_h}{e}.
$$

\begin{lemma}
\label{lem:learning-no-early-opt-service}
$\OPT$ does not serve before $t_0$ any request that LA-RB clears at $t_0$ and whose location lies in $(b_h,b_j]$.
\end{lemma}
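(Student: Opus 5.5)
The plan is to copy the proof of Lemma~\ref{lem:no-early-opt-service} essentially verbatim. That argument never uses the specific form of the tightness condition or of $W$. It only uses two facts: LA-RB serves a prefix $[0,\ell(t)]$ and clears every pending request located there, and the structure of the cover sequence together with the choice of $(s^\star,A)$. Both facts hold unchanged for LA-RB, since the urgency modification only changes \emph{when} services happen, not what a service clears.

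If $h=j$, the interval $(b_h,b_j]$ is empty and there is nothing to prove. So assume $h<j$, and fix a request $r=(\tau,x,w)$ cleared by LA-RB at $t_0$ with $x\in(b_h,b_j]$.

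\textbf{Step 1: $r$ arrives after $t_k$.} Suppose $\tau\le t_k$. Then $r$ would be pending at $t_k$, and since $\ell(t_k)\ge b_j\ge x$ (including the fictitious infinite service), it would have been cleared there. This contradicts $r$ still being pending at $t_0$. So $\tau>t_k$.

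\textbf{Step 2: an early $\OPT$ service of $r$ would be large.} Suppose $\OPT$ serves $r$ by a service $(s,L)$ with $s<t_0$. Then $L\ge x$ and $s\ge\tau>t_k$, so $s\in(t_k,t_0)$. We check that $(s,L)$ is large in each case.
\begin{itemize}
\item If $s\in(t_1,t_0]$, the threshold is $0$, so $(s,L)$ is large trivially.
\item If $s\in(t_{i+1},t_i]$, the threshold is $\ell(t_i)$. Since $\tau\le s\le t_i<t_0$ and $r$ survives the LA-RB service at $t_i$, we get $\ell(t_i)<x\le L$. So $(s,L)$ is large.
\end{itemize}
One point needs care: $r$ must already have arrived by $t_i$, which holds because $\tau\le s\le t_i$.

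\textbf{Step 3: contradiction with the choice of $A$.} A large service exists, so $A>0$ and $A\ge L$ by maximality. Since $h<j$, we have $h=H_U(A)$, hence $b_h>A$. But $L\ge x>b_h$, so $A\ge L>b_h>A$, which is impossible.

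The only real obstacle is confirming that nothing in the LA-RB setting breaks the "survives the service at $t_i$ implies $\ell(t_i)<x$" step. Advice coverage events could in principle create simultaneous events at one time, and services are triggered by $Q$ rather than $W$. Neither matters: any LA-RB service at time $t_i$ removes all pending requests with location at most $\ell(t_i)$, independent of urgencies. Everything else is identical to the RB case.
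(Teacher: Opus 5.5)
Your proof is correct and follows the paper's argument step for step. Like the paper, you treat $h=j$ as vacuous, get $\tau>t_k$ from $\ell(t_k)\ge b_j\ge x$, show that any earlier $\OPT$ service $(s,L)$ of $r$ is large via the threshold case split, and conclude with $A\ge L>b_h>A$; your explicit check that $\tau\le s\le t_i$ (so $r$ is pending at $t_i$) spells out a detail the paper leaves implicit.
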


\begin{proof}
The interval $(b_h,b_j]$ is empty when $h=j$.
We therefore take the case $h<j$.
We choose a request $r=(\tau,x,w)$ that LA-RB clears at $t_0$ and that has $x\in(b_h,b_j]$.

The arrival time of $r$ satisfies $\tau>t_k$.
Otherwise, the service at $t_k$ would clear $r$ because $\ell(t_k)\ge b_j\ge x$.
For a contradiction, we assume that $\OPT$ serves $r$ at a time $s<t_0$.
We write this $\OPT$ service as $(s,L)$.
This service has $L\ge x$ and $s\in(t_k,t_0)$.

The service $(s,L)$ is large.
If $s\in(t_1,t_0]$, then its cover threshold is $0$.
If $s\in(t_{i+1},t_i]$, then the LA-RB service at $t_i$ has length less than $x$.
This fact holds because $r$ remains pending for LA-RB until $t_0$.
Thus, $L\ge x>\ell(t_i)$ in the second case.
Both cases show that $(s,L)$ is large.

The existence of $(s,L)$ gives $A>0$.
The choice of $(s^\star,A)$ also gives $A\ge L$.
Since $h<j$, the definition of $h$ gives $h=H_U(A)$.
The definition of $H_U(A)$ then gives $b_h>A$.
On the other hand, $L\ge x>b_h$.
The inequalities $A\ge L>b_h>A$ give a contradiction.
This contradiction proves the lemma.
\end{proof}

The request urgencies grow continuously between events.
They have no upward jumps at arrival or coverage times.
Thus, no level can move above its tight value without starting a service.
Tightness and the level invariant give
$$
Q(t_0,b_j)=\frac{b_j}{e}
\qquad\text{and}\qquad
Q(t_0,b_h)\le\frac{b_h}{e}.
$$
The second bound uses the zero convention when $h=-\infty$.
Thus, the requests in $(b_h,b_j]$ have total urgency at least
$$
Q(t_0,b_j)-Q(t_0,b_h)
\ge\frac{b_j-b_h}{e}.
$$
Lemma~\ref{lem:learning-no-early-opt-service} shows that these requests remain pending for $\OPT$ through time $t_0$.
The upper urgency bound shows that their real delay is at least $\lambda$ times their urgency.
We can therefore charge $\lambda(b_j-b_h)/e$ to $\OPT$ delay cost.
When $A>0$, we charge the remaining amount $\lambda b_h/e$ to the service $(s^\star,A)$.

\begin{lemma}
\label{lem:learning-no-duplicate-level}
For fixed $U$, each $\OPT$ service receives at most one LA-RB service charge at any level $h$.
\end{lemma}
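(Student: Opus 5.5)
The argument is the one used for Lemma~\ref{lem:no-duplicate-level}, since it relies only on the structure of the cover sequences and the choice of $(s^\star,A)$, not on how tightness is measured. Fix $U$. Take two LA-RB services $(t,b_j)$ and $(t',b_{j'})$ with $t<t'$ that both charge the same $\OPT$ service $(s^\star,A)$ with $A>0$, and let $h,h'$ be their charged levels. Because $(s^\star,A)$ is charged by the earlier service, it lies in that service's window $(t_k,t]$, so $s^\star\le t$.

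\textbf{Step 1: show $j'>j$.} Suppose instead $j'\le j$. The cover sequence built backward from $t'$ stops at the first service of length at least $b_{j'}$. The service at $t$ has length $b_j\ge b_{j'}$, so this sequence stops at some time $t'_{k'}\ge t$. Its window $(t'_{k'},t']$ then excludes $s^\star\le t$, so $(s^\star,A)$ cannot be large for the later service. This contradicts the assumption that the later service charges it.

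\textbf{Step 2: show $A\ge b_j$.} Since $j'>j$, the service at $t$ does not stop the later sequence. We need the cover threshold at $s^\star$ in the later sequence to be at least $\ell(t)=b_j$. The time $t$ lies in some interval $(t'_{i+1},t'_i]$ of the later sequence. Every LA-RB service inside such an interval has length at most its threshold, so the threshold there is at least $b_j$. Thresholds do not decrease moving backward, and $s^\star\le t$, so the threshold at $s^\star$ is also at least $b_j$. One subtlety is the case $t=t'_1$ versus $t$ lying inside $(t'_1,t']$. The threshold on $(t'_1,t']$ is $0$, but no service lies strictly between $t'_1$ and $t'$, so $t\le t'_1$ and the interval containing $t$ has a positive threshold. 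Since $(s^\star,A)$ is large for the later sequence, $A\ge b_j$.

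\textbf{Step 3: compare levels.} Let $m=H_U(A)$. From $b_m>A\ge b_j$ we get $m>j$. Hence $h=\min\{j,m\}=j$, while $h'=\min\{j',m\}>j$ because both $j'>j$ and $m>j$. So the two charges use distinct levels. Applying this to every pair of charges to the same $\OPT$ service proves the lemma.

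The main obstacle is Step 2: making sure the threshold invariant ("every earlier service inside an interval has length at most its threshold") holds for LA-RB. It does, because the cover sequence is defined purely from service lengths and times, and the urgency $q$ never enters the construction.
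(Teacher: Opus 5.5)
Your proposal is correct and follows the paper's proof step for step: first $s^\star\le t$, then $j'>j$ because otherwise the later cover sequence stops at or after $t$, then $A\ge b_j$ from the later sequence's threshold at $s^\star$, and finally $h=j<h'$ via $m=H_U(A)>j$. Your extra remarks are sound refinements of the paper's terse step ``the earlier LA-RB service lies between $s^\star$ and $t'$'': you check that $t\le t'_1$, so $t$ does not fall in the zero-threshold interval, and you note that each interval's threshold bounds the lengths of the services inside it.
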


\begin{proof}
We take two LA-RB services $(t,b_j)$ and $(t',b_{j'})$ with $t<t'$.
We assume that both services charge the same $\OPT$ service $(s^\star,A)$.
We write their charged levels as $h$ and $h'$.
The earlier charge gives $s^\star\le t$.

We first show that $j'>j$.
If $j'\le j$, then the later cover sequence stops at or after $t$.
This sequence then cannot contain $s^\star\le t$.
This is impossible because the later service also charges $(s^\star,A)$.
Thus, $j'>j$.

The cover threshold at $s^\star$ in the later sequence is at least $b_j$.
This bound holds because the earlier LA-RB service lies between $s^\star$ and $t'$.
It also holds because cover thresholds do not decrease as the sequence moves backward.
The service $(s^\star,A)$ is large for the later sequence.
Thus, $A\ge b_j$.

We write $m=H_U(A)$.
The inequalities $A\ge b_j$ and $b_m>A$ give $m>j$.
The earlier service uses level $h=\min\{j,m\}=j$.
The later service uses level $h'=\min\{j',m\}>j=h$.
Thus, the two charges use different levels.
The same argument applies to every pair of charges.
Hence, the lemma follows.
\end{proof}

\begin{lemma}
\label{lem:learning-expected-base-charge}
For every fixed instance and advice sequence, the robust base charges satisfy
$$
\mathbb E_U\!\left[
\sum_{\substack{\text{\rm LA-RB services}\\(t,b_j)}}
\lambda\frac{b_j}{e}
\right]
\le\cost(\OPT).
$$
\end{lemma}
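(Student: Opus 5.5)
The plan mirrors the proof of inequality~\eqref{eq:supp-rb-base-charge} for RB, with the factor $\lambda$ absorbed by the upper urgency bound. Fix the offset $U$, and for each LA-RB service $(t_0,b_j)$ use the split
$$
\lambda\frac{b_j}{e}=\lambda\frac{b_j-b_h}{e}+\lambda\frac{b_h}{e}
$$
introduced above. The argument bounds the two parts separately: the first part in total by $\delay(\OPT)$ for every fixed $U$, and the second part in expectation over $U$ by $\service(\OPT)$.

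\textbf{Delay part.} By Lemma~\ref{lem:learning-no-early-opt-service}, every request $r$ that LA-RB clears at $t_0$ with location in $(b_h,b_j]$ is still pending for $\OPT$ at $t_0$. So $\OPT$ pays at least $w(t_0-\tau)\ge\lambda q(t_0,r)$ delay on $r$ before time $t_0$. The level invariant $Q(t_0,b_h)\le b_h/e$ holds because urgencies have no upward jumps and LA-RB always serves a tight level. Combining it with tightness $Q(t_0,b_j)=b_j/e$ gives
$$
\sum_r q(t_0,r)\ge \frac{b_j-b_h}{e}.
$$
Hence this $\OPT$ delay is at least $\lambda(b_j-b_h)/e$. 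LA-RB clears each request exactly once, so these request sets are disjoint across services. Summing therefore gives at most $\delay(\OPT)$, pointwise in $U$.

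\textbf{Service part.} Each charge $\lambda b_h/e$ with $A>0$ goes to the associated $\OPT$ service $(s^\star,A)$, and it satisfies $h\le H_U(A)$ by definition. By Lemma~\ref{lem:learning-no-duplicate-level}, each $\OPT$ service receives at most one charge per level. So for fixed $U$, its total charge is at most
$$
\lambda\sum_{i\le H_U(A)}\frac{b_i}{e}\le\sum_{i\le H_U(A)}\frac{b_i}{e},
$$
using $\lambda\le1$. Lemma~\ref{lem:random-bidding} shows the expectation over $U$ of the right-hand side equals $A$. Summing over $\OPT$ services and using linearity of expectation bounds the expected service part by $\service(\OPT)$.

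Adding the two bounds yields the claim. Because the adversary is oblivious, the instance and advice, and hence $\OPT$, are fixed before $U$ is drawn; this is what justifies taking the expectation termwise.

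The step needing the most care is the level invariant $Q(t_0,b_h)\le b_h/e$ in the learning-augmented setting. I would argue it as in the trigger invariant in the proof of Lemma~\ref{lem:post-advice-waiting}:
\begin{itemize}
\item arrivals add zero urgency;
\item a coverage event changes only the growth rate of an urgency, not its value;
\item between events, potentials grow continuously;
\item services only decrease potentials.
\end{itemize}
Once that invariant holds, everything else transfers verbatim from the RB analysis.
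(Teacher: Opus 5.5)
Your proposal is correct and follows essentially the same route as the paper: the same split $\lambda b_j/e=\lambda(b_j-b_h)/e+\lambda b_h/e$, and the same delay bound, which holds for each fixed $U$ via Lemma~\ref{lem:learning-no-early-opt-service}, the level invariant and the upper urgency bound. The service bound is also the same, via Lemma~\ref{lem:learning-no-duplicate-level}, Lemma~\ref{lem:random-bidding} and $\lambda\le 1$; the only cosmetic difference is that you drop $\lambda$ before taking the expectation, whereas the paper computes $\mathbb E_U[\lambda\sum_{i\le H_U(A)}b_i/e]=\lambda A\le A$.
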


\begin{proof}
Requests used by different delay charges are disjoint because LA-RB clears each request once.
Thus, the total delay charge is at most $\delay(\OPT)$ for every fixed $U$.

We now take one $\OPT$ service of length $A>0$.
Lemma~\ref{lem:learning-no-duplicate-level} bounds its service charge for fixed $U$ by $\lambda\sum_{i\le H_U(A)}\frac{b_i}{e}$.
Lemma~\ref{lem:random-bidding} gives
$$
\mathbb E_U\!\left[
\lambda\sum_{i\le H_U(A)}\frac{b_i}{e}
\right]
=\lambda A
\le A.
$$
The sum over all $\OPT$ services is at most $\service(\OPT)$ in expectation.
The delay and service bounds prove the lemma.
\end{proof}

\begin{lemma}[Robustness]
\label{lemma:learning-randomized-robustness}
LA-RB is $(e/\lambda+1/\lambda^2)$-robust against an oblivious adversary.
\end{lemma}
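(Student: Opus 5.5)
The robustness bound follows by combining the per-service cost bound of Lemma~\ref{lem:random-one-service-cost} with the expected charging bound of Lemma~\ref{lem:learning-expected-base-charge}. Fix the instance $I$ and the advice sequence, both chosen by the oblivious adversary before $U$ is drawn, and fix an offset $U$.

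First, we partition the cost of LA-RB by services. Every request is cleared by exactly one LA-RB service, so the total cost splits as a sum over services $(t,b_j)$ of local costs. Each local cost is the service length $b_j$ plus the delay of the requests cleared at $t$. By Lemma~\ref{lem:random-one-service-cost}, each local cost is at most $b_j+b_j/(e\lambda)$.

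Next, we express this local cost as a multiple of the robust base charge $\lambda b_j/e$:
$$
\frac{b_j+b_j/(e\lambda)}{\lambda b_j/e}=\frac{e}{\lambda}+\frac{1}{\lambda^2}.
$$
So for every fixed $U$,
$$
\cost(\text{LA-RB})\le\Bigl(\frac{e}{\lambda}+\frac{1}{\lambda^2}\Bigr)\sum_{(t,b_j)}\lambda\frac{b_j}{e}.
$$

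Finally, we take the expectation over $U$. Since the prefactor does not depend on $U$, linearity of expectation together with Lemma~\ref{lem:learning-expected-base-charge} gives
$$
\mathbb E_U[\cost(\text{LA-RB})]\le\Bigl(\frac{e}{\lambda}+\frac{1}{\lambda^2}\Bigr)\cost(\OPT).
$$
This is the claimed robustness.

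The one point that needs care is the role of the oblivious adversary. The instance, the advice, and hence $\OPT$ must be independent of $U$. This is what allows Lemma~\ref{lem:learning-expected-base-charge}, which is stated for every fixed instance and advice sequence, to be applied inside the expectation. We also use that LA-RB is well defined for every $U$, so the sum over its services is a measurable random variable.

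With those observations in place, the remaining steps are routine, and the substantive work is already contained in the preceding lemmas.
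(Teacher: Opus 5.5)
Your proposal is correct and matches the paper's proof: you rewrite the local cost bound $b_j+b_j/(e\lambda)$ from Lemma~\ref{lem:random-one-service-cost} as $(e/\lambda+1/\lambda^2)$ times the robust base charge $\lambda b_j/e$, then apply Lemma~\ref{lem:learning-expected-base-charge}. Your remarks on the per-service partition of the cost and on the independence of $U$ from the oblivious adversary's choices just spell out what the paper leaves implicit.
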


\begin{proof}
Lemma~\ref{lem:random-one-service-cost} gives
\begin{align*}
b_j+\frac{b_j}{e\lambda}
&=\left(e+\frac{1}{\lambda}\right)\frac{b_j}{e}
=\left(\frac{e}{\lambda}+\frac{1}{\lambda^2}\right)
\lambda\frac{b_j}{e}.
\end{align*}
Lemma~\ref{lem:learning-expected-base-charge} bounds the expected sum of the robust base charges by $\cost(\OPT)$.
Thus,
$$
\mathbb E_U[\cost(\text{LA-RB})]
\le
\left(\frac{e}{\lambda}+\frac{1}{\lambda^2}\right)
\cost(\OPT).
$$
Hence, the lemma follows.
\end{proof}

\paragraph{Consistency.}
We now compare LA-RB with the feasible advice $\ADV$.
For each request $r=(\tau,x,w)$, we write $t_{\mathrm{ser}}(r)$ for the time when LA-RB serves $r$.
We define
$$
\text{pre}(r)
=w\bigl(\min\{t_{\mathrm{ser}}(r),t_{\mathrm{cov}}(r)\}-\tau\bigr)_+,
$$
and
$$
\text{post}(r)
=w\bigl(t_{\mathrm{ser}}(r)-t_{\mathrm{cov}}(r)\bigr)_+.
$$
We also define $\mathrm{Pre}=\sum_r\text{pre}(r)$ and $\mathrm{Post}=\sum_r\text{post}(r)$.
These two parts satisfy $\delay(\text{LA-RB})=\mathrm{Pre}+\mathrm{Post}$.
The first advice service that covers $r$ occurs at $t_{\mathrm{cov}}(r)$.
Thus, the delay paid by $\ADV$ for $r$ is at least $\text{pre}(r)$.
The sum over all requests gives $\mathrm{Pre}\le\delay(\ADV)$ for every fixed offset $U$.

\begin{lemma}
\label{lem:learning-post-advice-waiting}
$\mathbb E_U[\mathrm{Post}]
\le\lambda \service(\ADV).$
\end{lemma}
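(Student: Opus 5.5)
The plan is to rerun the proof of Lemma~\ref{lem:post-advice-waiting} for a fixed offset $U$, with the dyadic grid replaced by the levels $b_i=e^{i+U}$ and the threshold $g/4$ replaced by $b_i/e$. We then take the expectation over $U$ at the very end, using the same computation as in Lemma~\ref{lem:random-bidding}.

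\textbf{Step 1: set-up for a fixed $U$.} Fix an advice time $s$ with length $a_s$. Let $B_s$ be the set of requests with $t_{\mathrm{cov}}(r)=s$ that are still pending after the advice is processed at $s$. As before, these sets cover all positive contributions to $\mathrm{Post}$ without overlap, and this holds for every $U$. For $r\in B_s$ put $u_s(t,r)=\frac{w}{\lambda}(t-s)$. Then $u_s(t,r)\le q(t,r)$ and $u_s(t_{\mathrm{ser}}(r),r)=\mathrm{post}(r)/\lambda$. Instead of the smallest level containing $B_s$, which depends on $U$ through $B_s$, I will use the smallest random level that is at least $a_s$:
$$G_s=\min\{b_i:b_i\ge a_s\}.$$
This level depends only on $a_s$ and $U$, not on the run of the algorithm. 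Since every request of $B_s$ has location at most $a_s\le G_s$, one LA-RB service of length $G_s$ or more would clear all remaining requests of $B_s$.

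\textbf{Step 2: charging within one group.} List the LA-RB services that clear requests of $B_s$ as $(t_1,h_1),\ldots,(t_k,h_k)$.
\begin{itemize}
\item As in the deterministic proof, $h_1<\cdots<h_{k-1}<G_s$. These are distinct grid levels below $G_s$, so $\sum_{i<k}h_i\le\sum_{b_i<G_s}b_i=G_s/(e-1)$.
\item Tightness gives $\sum_{r\in R_i}u_s(t_i,r)\le Q^-(t_i,h_i)=h_i/e$ for each $i<k$.
\item For the last service, the trigger invariant $Q^-(t,b_i)\le b_i/e$ holds for the same reason as before: there are no upward jumps at arrivals or coverage events, and potentials grow continuously. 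It yields $\sum_{r\in R_k}u_s(t_k,r)\le Q^-(t_k,G_s)\le G_s/e$.
\end{itemize}
Adding these bounds gives
$$\frac1\lambda\sum_{r\in B_s}\mathrm{post}(r)\le \frac{G_s}{e}\Bigl(1+\frac{1}{e-1}\Bigr)=\frac{G_s}{e-1}.$$

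\textbf{Step 3: expectation over $U$.} Taking expectations,
$$\mathbb E_U\Bigl[\sum_{r\in B_s}\mathrm{post}(r)\Bigr]\le \frac{\lambda}{e-1}\,\mathbb E_U[G_s].$$
Exactly as in Lemma~\ref{lem:random-bidding}, $G_s=a_se^{Z}$ with $Z$ uniform on $[0,1]$ almost surely, so $\mathbb E_U[G_s]=a_s(e-1)$. The group bound is therefore $\lambda a_s$. Summing over advice times by linearity gives $\mathbb E_U[\mathrm{Post}]\le\lambda\sum_s a_s=\lambda\,\service(\ADV)$.

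\textbf{Main obstacle.} The point I expect to need care is the dependence issue. In the deterministic proof the level was chosen from the maximum location in $B_s$, which here is a random set correlated with $U$. Using the level above $a_s$ instead removes that dependence, and Step 2 still goes through because $a_s\ge\max_{r\in B_s}x$. I will also check that the prefix sum $\sum_{b_i<G_s}b_i=G_s/(e-1)$ is exact rather than a strict bound, so the constant matches $\lambda$ exactly. The oblivious-adversary assumption is what makes $a_s$ and the advice times independent of $U$.
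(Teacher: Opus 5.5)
Your proposal is correct and follows the paper's proof essentially step by step. Both arguments group requests by first coverage time for a fixed $U$, take $G_s$ as the first random level above $a_s$ (you use $\ge$ and the paper uses $>$, which agree almost surely), bound the pre-last services by $G_s/(e(e-1))$ and the last one by $G_s/e$ via the trigger invariant, and then use $\mathbb E_U[G_s]=(e-1)a_s$ from the random-bidding computation.
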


\begin{proof}
Recall that the random levels are $b_i=e^{i+U}$ for $i\in\mathbb Z$, where $U\sim\mathrm{Unif}[0,1]$.
We group requests according to their first coverage times.
Fix an advice time $s$ with advice length $a_s>0$.
For a fixed offset $U$, let $B_s$ be the set of requests with $t_{\mathrm{cov}}(r)=s$ that are still pending after LA-RB processes the advice at time $s$ and just before it checks the trigger.
Both $B_s$ and the service times of its requests may depend on $U$; we suppress this dependence until we take expectations.
A request served before $s$ has zero post-coverage delay, so excluding it from $B_s$ loses nothing.

Let $H_U(a_s)=\min\{i\in\mathbb Z:b_i>a_s\}$ and define $G_s=b_{H_U(a_s)}$.
Thus, $G_s$ is the first random level strictly above $a_s$.
Every request in $B_s$ has location at most $a_s<G_s$.
If $B_s=\varnothing$, its contribution to $\mathrm{Post}$ is zero.
We therefore fix $U$ and assume that $B_s\ne\varnothing$.

For a request $r=(\tau,x,w)\in B_s$, define its post-coverage urgency while it remains pending by
$$
u_s(t,r)=\frac{w}{\lambda}(t-s),\qquad t\ge s.
$$
Since $t_{\mathrm{cov}}(r)=s$, the urgency of a pending request $r\in B_s$ is
$$
q(t,r)=\lambda w(s-\tau)+u_s(t,r)\ge u_s(t,r).
$$
At the service time of $r$, we have
$$
u_s(t_{\mathrm{ser}}(r),r)=\frac{\text{post}(r)}{\lambda}.
$$
It is therefore enough to bound the total post-coverage urgency cleared from $B_s$.

Let $Q^-(t,b_i)$ denote the level potential after processing arrivals and advice coverage at time $t$, but immediately before the trigger service.
List the LA-RB services that clear at least one request from $B_s$ as $(t_1,h_1),\ldots,(t_k,h_k)$ in chronological order, where each $h_\ell$ is one of the random levels.
Let $R_\ell\subseteq B_s$ be the requests from $B_s$ cleared by the service $(t_\ell,h_\ell)$.
The sets $R_1,\ldots,R_k$ partition $B_s$.

We first consider the services before the last one.
For every $\ell<k$, we have $h_\ell<G_s$.
Otherwise, the service of length $h_\ell$ would clear every remaining request in $B_s$, since all their locations are smaller than $G_s$.
The relevant service lengths also increase strictly.
After a service of length $h_\ell$, each request in $B_s$ that remains pending has location larger than $h_\ell$.
Any later service that clears one of these requests must therefore have a larger length.
At time $t_\ell$, the served level $h_\ell$ is tight, and hence
$$
\sum_{r\in R_\ell}u_s(t_\ell,r)\le \sum_{r\in R_\ell}q(t_\ell,r)\le Q^-(t_\ell,h_\ell)=\frac{h_\ell}{e}.
$$
The levels below $G_s$ form a geometric sequence with ratio $e$.
Because the relevant pre-last services use a finite set of distinct levels below $G_s$, their total contribution is strictly smaller than
$$
\sum_{b_i<G_s}\frac{b_i}{e}=\frac{G_s}{e(e-1)}.
$$
It remains to bound the requests cleared by the last relevant service.
For every random level $b_i$, the trigger rule maintains the invariant
$$
Q^-(t,b_i)\le\frac{b_i}{e}.
$$
Indeed, a newly arriving request has zero urgency, while first coverage changes the growth rate of its urgency but not its current value.
Thus, arrivals and coverage events do not create upward jumps in $Q(t,b_i)$.
Between events, the potential grows continuously, and a service can only decrease it.
Once $Q(t,b_i)$ reaches $b_i/e$, the level $b_i$ is tight, so LA-RB serves the largest tight level, whose length is at least $b_i$.
The potential cannot cross the threshold before the trigger is processed.

Immediately before the last relevant service at time $t_k$, every request in $R_k$ is pending and lies in $[0,G_s]$.
The trigger invariant gives
$$
\sum_{r\in R_k}u_s(t_k,r)\le Q^-(t_k,G_s)\le\frac{G_s}{e}.
$$
Combining the pre-last and last-service contributions, we obtain
$$
\frac{1}{\lambda}\sum_{r\in B_s}\text{post}(r)=\sum_{\ell=1}^k\sum_{r\in R_\ell}u_s(t_\ell,r)<\frac{G_s}{e(e-1)}+\frac{G_s}{e}=\frac{G_s}{e-1}.
$$
Therefore, for every fixed $U$,
$$
\sum_{r\in B_s}\text{post}(r)\le\lambda\frac{G_s}{e-1}.
$$
By the geometric-series identity,
$$
\frac{G_s}{e-1}=\sum_{i\le H_U(a_s)}\frac{b_i}{e}.
$$
Lemma~\ref{lem:random-bidding}, applied with $A=a_s$, now gives
$$
\mathbb E_U\!\left[\frac{G_s}{e-1}\right]=a_s.
$$
Taking expectations in the fixed-$U$ bound yields
$$
\mathbb E_U\!\left[\sum_{r\in B_s}\text{post}(r)\right]\le\lambda a_s.
$$
For every fixed $U$, a request with positive post-coverage delay satisfies $t_{\mathrm{ser}}(r)>t_{\mathrm{cov}}(r)$.
Such a request is still pending at its unique first coverage time and hence belongs to exactly one set $B_s$.
The sets $B_s$ therefore cover all positive contributions to $\mathrm{Post}$ without overlap.
The advice sequence is fixed independently of $U$, so summing over all advice times and using linearity of expectation gives
\begin{align*}
\mathbb E_U[\mathrm{Post}]&=\sum_{s:a_s>0}\mathbb E_U\!\left[\sum_{r\in B_s}\text{post}(r)\right]\\
&\le\lambda\sum_{s:a_s>0}a_s=\lambda\service(\ADV).
\end{align*}
This proves the lemma.
\end{proof}

\begin{lemma}[Consistency]
\label{lemma:learning-randomized-consistency}
LA-RB is $(e+\lambda)$-consistent.
\end{lemma}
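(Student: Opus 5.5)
The plan is to mirror the proof of Lemma~\ref{lemma:lab-consistency}, replacing the deterministic dyadic factor $4$ by the random-grid factor $e$ and taking expectations over $U$ at the end. We fix an instance $I$ and feasible advice $\ADV$, both chosen independently of $U$, and then fix a realization of the offset $U$. The service-cost identity is obtained exactly as for LA-B. Each LA-RB service of length $b_j$ at time $t$ is triggered by tightness, $Q(t,b_j)=b_j/e$. That service clears precisely the pending requests contributing to $Q(t,b_j)$, and every request is cleared once. Summing over services therefore gives
$$
\service(\text{LA-RB})=e\sum_r q(t_{\mathrm{ser}}(r),r).
$$
Evaluating the urgency at the service time, as in the LA-B case, gives $q(t_{\mathrm{ser}}(r),r)=\lambda\,\text{pre}(r)+\frac{1}{\lambda}\text{post}(r)$. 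Hence, for every fixed $U$,
$$
\service(\text{LA-RB})=e\lambda\,\mathrm{Pre}+\frac{e}{\lambda}\mathrm{Post}.
$$

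Next we take expectations over $U$ and bound the two terms separately. For the first term, the pointwise bound $\mathrm{Pre}\le\delay(\ADV)$ holds for every $U$, as noted just before Lemma~\ref{lem:learning-post-advice-waiting}. For the second term, Lemma~\ref{lem:learning-post-advice-waiting} gives $\mathbb E_U[\mathrm{Post}]\le\lambda\,\service(\ADV)$. Together these yield
$$
\mathbb E_U[\service(\text{LA-RB})]\le e\lambda\,\delay(\ADV)+e\,\service(\ADV).
$$
For the delay cost we use the identity $\delay(\text{LA-RB})=\mathrm{Pre}+\mathrm{Post}$ and the same two bounds, which give
$$
\mathbb E_U[\delay(\text{LA-RB})]\le\delay(\ADV)+\lambda\,\service(\ADV).
$$

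Adding the two expected bounds gives
$$
\mathbb E_U[\cost(\text{LA-RB})]\le(e\lambda+1)\,\delay(\ADV)+(e+\lambda)\,\service(\ADV).
$$
To finish, we need $e\lambda+1\le e+\lambda$ for $\lambda\in(0,1]$. This is equivalent to $(e-1)(1-\lambda)\ge 0$, which clearly holds. Both coefficients are therefore at most $e+\lambda$, and the right-hand side is at most $(e+\lambda)\cost(\ADV)$.

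The only step requiring care is the passage to expectations. The quantities $\mathrm{Pre}$, $\mathrm{Post}$ and $t_{\mathrm{ser}}$ all depend on $U$, while $\ADV$ does not, since the adversary is oblivious. All identities are therefore proved pointwise in $U$, and only the final inequalities are averaged. The real difficulty, the random bound on $\mathrm{Post}$, is already contained in Lemma~\ref{lem:learning-post-advice-waiting}, so what remains is bookkeeping.
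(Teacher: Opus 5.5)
Your proposal is correct and follows the paper's proof essentially line for line. You use the pointwise identity $\service(\text{LA-RB})=e\lambda\,\mathrm{Pre}+\frac{e}{\lambda}\mathrm{Post}$ from tightness, bound $\mathrm{Pre}\le\delay(\ADV)$ pointwise and $\mathbb E_U[\mathrm{Post}]$ via Lemma~\ref{lem:learning-post-advice-waiting}, and finish with $1+e\lambda\le e+\lambda$, exactly as the paper does, including the same pointwise-in-$U$ bookkeeping with expectations taken only at the end (valid because $\ADV$ is fixed independently of $U$).
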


\begin{proof}
We first bound the service cost of LA-RB.
For each service of length $b_j$ at time $t$, tightness gives $b_j=eQ(t,b_j)$.
The service clears exactly the requests that contribute to $Q(t,b_j)$.
Each request contributes at its own service time only once.
Thus, for every fixed $U$,
$$
\service(\text{LA-RB})
=e\sum_r q(t_{\mathrm{ser}}(r),r).
$$
The definition of request urgency gives
$$
q(t_{\mathrm{ser}}(r),r)
=\lambda\text{pre}(r)
+\frac{1}{\lambda}\text{post}(r).
$$
Therefore,
\begin{align*}
\MoveEqLeft \mathbb E_U[\service(\text{LA-RB})] \\
&=e\lambda\cdot\mathbb E_U[\mathrm{Pre}]+\frac{e}{\lambda}\cdot\mathbb E_U[\mathrm{Post}]\\
&\le e\lambda\cdot\delay(\ADV)+e\cdot\service(\ADV).
\end{align*}
The last bound uses $\mathrm{Pre}\le\delay(\ADV)$ and Lemma~\ref{lem:learning-post-advice-waiting}.

The delay identity and the same two bounds give
$$
\mathbb E_U[\delay(\text{LA-RB})]
\le\delay(\ADV)+\lambda\service(\ADV).
$$
The two cost bounds give
\begin{align*}
\MoveEqLeft \mathbb E_U[\cost(\text{LA-RB})] \\ 
&\le(1+e\lambda) \delay(\ADV) +(e+\lambda)\service(\ADV)\\
&\le(e+\lambda)\cost(\ADV).
\end{align*}
The last step uses $1+e\lambda\le e+\lambda$ for $\lambda\in(0,1]$.
Hence, the lemma follows.
\end{proof}

\bibliography{aaai2027}

\let\maketitle\arxivOriginalMaketitle
\let\bibliography\arxivOriginalBibliography

\bibliography{aaai2027}

\end{document}